\newcommand\bi{\bm i}
\newcommand\bj{\bm j}
\newcommand\bk{\bm k}
\newcommand\starop[1]{\mathsf S_{#1}}
\newcommand\setinvertibledualquat{\mathbb D\mathbb H}
\newcommand\setdualquat{\mathfrak d \mathfrak h}
\newcommand\setunitdualquat{\mathbb S\mathbb D\mathbb H}
\newcommand\setvectordualquat{\mathfrak{s}\mathfrak d \mathfrak h}
\DeclareMathOperator\realpart{Re}
\DeclareMathOperator\imagpart{Im}
\DeclareMathOperator\size{{size}}
\newcommand{\liederiv}{\mathcal L}
\newtheorem{Lemma}{Lemma}
\newtheorem{Theorem}{Theorem}
\begin{document}

\title[Lie derivatives with dual quaternions]{Using Lie derivatives with dual quaternions for parallel robots}

\author{Stephen Montgomery-Smith}
\address{Department of Mathematics, University of Missouri, Columbia, MO 65211.\\
\rm\url{stephen@missouri.edu}\\
\rm\url{https://stephenmontgomerysmith.github.io}}

\author{Cecil Shy}
\address{Johnson Space Center, 2101 E.~NASA Parkway, Houston, TX 77058.\\
\rm\url{cecil.shy-1@nasa.gov}}

\keywords{forward kinematics, dynamics, pose, twist, wrench, normalization, Stewart Platform}

\date{November 27, 2023}

\begin{abstract}
We introduce the notion of the Lie derivative in the context of dual quaternions that represent rigid motions and twists.  First we define the wrench in terms of dual quaternions.  Then we show how the Lie derivative helps understand how actuators affect an end effector in parallel robots, and make it explicit in the two cases case of Stewart Platforms, and cable-driven parallel robots.  We also show how to use Lie derivatives with the Newton-Raphson Method to solve the forward kinematic problem for over constrained parallel actuators.  Finally, we derive the equations of motion of the end effector in dual quaternion form, which include the effect of inertia from the actuators.
\end{abstract}

\maketitle


\section{Introduction}

This paper is broadly about \emph{poses} and/or \emph{rigid motions}, and how they can be represented by \emph{unit dual quaternions}.  A pose is a description of a frame of reference with respect to a fixed frame of reference, and consists of an \emph{orientation}, and a \emph{position}.  A rigid motion consists of a \emph{rotation} followed by a \emph{translation}.  From a mathematical point of view, these can be considered to be the same thing, and so we use these terms interchangeably (but see \cite{chirikjian-et-al} for a different point of view).

Along with poses and/or rigid motions, we have the notion of \emph{screws}.  First we have rates of change of poses/rigid bodies, which are angular and translational velocities, and are described by \emph{twists}.  Second we have descriptions of how to change the inertia of the rigid body, the \emph{wrench}, that is, the torque and the force applied to the body.  For more reading on rigid body kinematics and dynamics, including screw theory, we refer the reader to \cite{ball,bottema-et-al,gallardo-alvarado,selig-book}.

The notion of the dual quaternion, and its use to represent poses and rigid motions, seems to go back to McAulay \cite{mcauley}, inspired by the earlier work by Clifford \cite{clifford}.  The notion of using dual quaternions to represent twists may be found in \cite{adorno,perez-et-al}.  A basic introduction to dual quaternions may be found in \cite{kenwright,montgomery-smith-et-al}, the latter also covering twists.  Many authors have used dual quaternions to represent hierarchies of poses, that is, chains of manipulators \cite{kenwright,perez-et-al,schilling1,schilling2,silva-et-al}.  Papers on representing kinematics or dynamics via dual quaternions include \cite{adorno,agrawal,dooley-et-al,han-et-al,kussaba-et-al,spong-et-al,wang-et-al}.  Dual quaternions have also found great use in computer graphics \cite{kavan-et-al,kavan-et-al-2}.

(The reader should be aware that \cite{adorno,han-et-al} have incorrect formulas for the logarithm and exponential of dual quaternions --- the correct formulas may be found in \cite{montgomery-smith-functional-calculus}, and \cite{selig} for the exponential.)


The purpose of this paper is to introduce the notion of using Lie derivatives for dual quaternions.  We show that these can be used to essentially automate the creation of rather complex formulas, which are required for forward kinematics, and for dynamic equations of motion.

The authors have successfully used these formulas, combine with the algorithm described in \cite{montgomery-smith-parallel}, to create software for controlling a cable-driven parallel robot, which was built by the Dynamic Systems Test Branch of the Software, Robotics, and Simulation Division (ER5) at the NASA Johnson Space Center.  Because of Export Administration Regulations we are unable to provide many more details.

The paper is quite heavy with mathematical formulas.  For this reason, the proofs of most of the statements, and many of the comments of a mathematical nature, relegated to the Appendix, Section~\ref{sec proofs}.

\section{Notation}

Rotations can be represented by unit quaternions \cite{quaternions1,quaternions2}, which we briefly describe here.  A \emph{quaternion} is a quadruple of real numbers, written as $A = w + x \bi + y \bj + z \bk$, with the algebraic operations $\bi^2 = \bj^2 = \bk^2 = \bi \bj \bk = -1$.  Its \emph{conjugate} is $A^* = w - x \bi - y \bj - y \bk$, its \emph{norm} is $|A| = (w^2+x^2+y^2+z^2)^{1/2} = \sqrt{A A^*} = \sqrt{A^* A}$, its \emph{normalization} is $\widehat A = A/|A|$, its \emph{real part} is $\realpart(A) = w = \tfrac12(A + A^*)$, and its \emph{imaginary part} is $\imagpart(A) = \bi x + \bj y + \bk z = \tfrac12(A - A^*)$.  It is called a \emph{unit} quaternion if $|A| = 1$, a \emph{real} quaternion if $\imagpart(A) = 0$, and a \emph{vector} quaternion if $\realpart(A) = 0$.  If $A \ne 0$, the multiplicative inverse is given by $A^{-1} = A^*/|A|^2$.  (Note that many sources use the word ``pure'' instead of ``vector'' in this context.)

We identify three dimensional vectors with vector quaternions, by identifying $\bi$, $\bj$, and $\bk$ with the three standard unit vectors.  A unit quaternion $Q$ represents the rotation which takes the direction $\bm r$ to $Q \bm r Q^*$.  A rotation by angle $a$ about an axis $\bm s$, where $|\bm s| = 1$, has two unit quaternion representations: $\pm(\cos(\tfrac12 a) + \bm s \sin(\tfrac12 a)) = \pm \exp(\tfrac12 a \bm s)$.  Composition of rotations corresponds to multiplication of unit quaternions.

We can represent quaternions as four dimensional vectors, and give it the inner product
\begin{equation}
A \cdot B = \realpart(A B^*) = \realpart(A^* B) .
\end{equation}

A \emph{dual quaternion} is a pair of quaternions, written as $\eta = A + \epsilon B$, with the extra algebraic operation $\epsilon^2 = 0$.  We call $A = \mathcal P(\eta)$ the \emph{primary part} of $\eta$, and $B = \mathcal D(\eta)$ the \emph{dual part} of $\eta$.

The \emph{conjugate} dual quaternion of $\eta = A + \epsilon B$ is $\eta^* = A^* + \epsilon B^*$.  Conjugation reverses the order of multiplication:
\begin{equation}
(\eta_1\eta_2)^* = \eta_2^* \eta_1^* .
\end{equation}
There is another conjugation for dual quaternions: $\overline{A + \epsilon B} = A - \epsilon B$, but we have no cause to use it in this paper, except in equation~\eqref{defn of rigid motion on 3-vector} below.

A \emph{unit} dual quaternion $\eta = Q + \epsilon B$ is a dual quaternion such that $\eta^*\eta = 1$, equivalently, that $Q$ is a unit quaternion and $B \cdot Q = 0$.  A \emph{vector} dual quaternion $A + \epsilon B$ is a dual quaternion such that both $A$ and $B$ are vector quaternions.

If $\eta = A + \epsilon B$ is a dual quaternion with $A \ne 0$, then its multiplicative inverse can be calculated using the formula
\begin{equation}
\eta^{-1} = A^{-1} - \epsilon A^{-1} B A^{-1}.
\end{equation}
If $\eta$ is a unit dual quaternion, then there is a computationally much faster formula (see \cite{adorno}):
\begin{equation}
\label{inverse unit}
\eta^{-1} = \eta^*.
\end{equation}

We set $\setinvertibledualquat$ for the set of invertible dual quaternions (that is, $A +\epsilon B$ where $A \ne 0$), $\setdualquat$ for the set of dual quaternions, $\setunitdualquat$ for the set of unit dual quaternions, and $\setvectordualquat$ for the set of vector dual quaternions.

A rigid motion of the form
\begin{equation}
\label{rigid motion action}
\bm r \mapsto Q \bm r Q^* + \bm t,
\end{equation}
where here $\bm t$ is a translation, can be represented by the unit dual quaternion
\begin{equation}
\label{rigid motion as dual quaternion}
\eta = Q + \tfrac12 \epsilon \bm t Q.
\end{equation}
Composition of rigid motions corresponds to multiplication of unit dual quaternions, where the notation $\eta_1 \eta_2$ means to apply first the rigid motion represented by $\eta_2$, and then by $\eta_1$, that is, the dual quaternion acts by left multiplication.  If $\bm r$ is a 3-vector, and $\bm s$ is the image of $\bm r$ under the action of the rigid motion $\eta = Q + \epsilon B$, then
\begin{equation}
\label{defn of rigid motion on 3-vector}
1 + \epsilon \bm s = \eta (1 + \epsilon \bm r) \overline\eta^* ,
\end{equation}
but generally it is easier to use the formula
\begin{equation}
\label{rigid motion on 3-vector}
\bm s = Q \bm r Q^* + 2 B Q^* = (Q \bm r + 2 B) Q^* .
\end{equation}

For a dual quaternion, it is not really possible to mix its primary and dual parts additively.  For example, for a unit dual quaternion that represents a rigid motion, the primary part is unitless, whereas the dual has units of length.  For this reason, when measuring how large such a dual quaternion is, everything must be with respect to a characteristic length scale $l$.  (For example, for a parallel robot, the characteristic length might be the width of the workspace of the end effector.)  The \emph{size} of a dual quaternion is defined to be
\begin{equation}
\label{size}
\size_{l}(\eta) = \left(|\mathcal P(\eta)|^2 + l^{-2}|\mathcal D(B)|^2\right)^{1/2}.
\end{equation}

A \emph{twist} is the pair of vectors $(\bm w, \bm v)$ that describes the rate of change of pose or rigid motion, where $\bm w$ is the angular velocity, and $\bm v$ is the translational velocity.  One can think of the twist $(\bm w, \bm v)$ as a rigid motion function of time $t$:
\begin{equation}
\bm r \mapsto \bm r + t (\bm w \times \bm r + \bm v) + O(t^2) \quad \text{as $t \to 0$} .
\end{equation}
It has two possible meanings, depending upon whether the twist is understood to be with respect to the fixed frame, or with respect to the moving frame.  If it is understood to be with respect to the moving frame, we have the formula
\begin{equation}
\label{ode twist}
\frac d{dt} (Q \bm r Q^* + \bm t) = Q (\bm w \times \bm r + \bm v) Q^* + \bm t,
\end{equation}
and if it is understood to be with respect to the fixed frame
\begin{equation}
\label{ode twist fixed}
\frac d{dt} (Q \bm r Q^* + \bm t) = \bm w \times (Q \bm r Q^* + \bm t) + \bm v .
\end{equation}
Then this twist can be represented by a vector dual quaternion \cite{adorno, agrawal}
\begin{equation}
\label{twist as dual quaternion}
\varphi = \tfrac12 \bm w + \tfrac12 \epsilon \bm v ,
\end{equation}
where if we understand it to be with respect to the moving frame, we have the formula
\begin{equation}
\label{ode dual quaternion}
\varphi = \eta^{-1} \dot \eta, \quad\text{or}\quad\dot \eta = \eta \varphi ,
\end{equation}
and if we understand it to be with respect to the fixed frame
\begin{equation}
\label{ode dual quaternion fixed}
\varphi = \dot \eta \eta^{-1}, \quad\text{or}\quad\dot \eta = \varphi \eta .
\end{equation}
In this paper, unless otherwise stated, we always understand the twist to be with respect to the moving frame.

We make the identification
\begin{equation}
\setdualquat \cong \mathbb R^8,
\end{equation}
using the basis
\begin{equation}
\label{dual quaternion basis}
(\beta_1, \beta_2, \beta_3, \beta_4, \beta_5, \beta_6, \beta_7, \beta_8) = (\bi, \bj, \bk, \epsilon\bi, \epsilon\bj, \epsilon\bk, 1, \epsilon),
\end{equation}
and similarly, we make the identification
\begin{equation}
\label{vectdualquat ident}
\setvectordualquat \cong \mathbb R^6,
\end{equation}
using the basis $(\beta_1$, $\beta_2$, $\beta_3$, $\beta_4$, $\beta_5$, $\beta_6)$.  With these identifications, we can define the dot product between two dual quaternions by transferring the usual definition of dot product on $\mathbb R^8$, that is
\begin{equation}
(A + \epsilon B) \cdot (C + \epsilon D) = A \cdot C + B \cdot D.
\end{equation}
In this way, every dual quaternion $\eta$ can be written in component form as
\begin{equation}
\eta = \sum_{i=1}^8 \eta_i \beta_i ,
\end{equation}
and every vector dual quaternion $\theta$ as
\begin{equation}
\theta = \sum_{i=1}^6 \theta_i \beta_i .
\end{equation}
Finally, we give a few extra formulas.  Let $\varphi_m$ denote the twist with respect to the moving frame, and $\varphi_f$ denote the twist with respect to the fixed frame.  From
equations~\eqref{ode dual quaternion} and~\eqref{ode dual quaternion fixed} we obtain
\begin{equation}
\varphi_f = \eta \varphi_m \eta^{-1} .
\end{equation}
Since in any algebra we have
\begin{equation}
\frac d{dt} \eta^{-1}  = - \eta^{-1} \dot \eta \eta^{-1},
\end{equation}
we obtain the surprisingly simple formula for the change of frame for acceleration:
\begin{equation}
\dot\varphi_f = \eta \dot\varphi_m \eta^{-1} .
\end{equation}
Note that this does not generalize to higher derivatives, for example, the formula for change of frame for jerk is
\begin{equation}
\ddot\varphi_f = \eta \ddot\varphi_m \eta^{-1} + \eta(\varphi\dot\varphi - \dot\varphi \varphi)\eta^{-1}.
\end{equation}

\section{Dual quaternions to represent wrenches}

Let the pose $\eta$ represent the reference frame that moves with the end effector.  It is not necessary (although it can simplify things) that the center of mass of the end effector coincides with the origin of the moving frame.

The \emph{wrench dual quaternion} is defined to be
\begin{equation}
\label{wrench as quaternion}
\tau = 2 \bm q + 2 \epsilon \bm p,
\end{equation}
where $\bm q$ and $\bm p$ are the torque and force, respectively, applied to the end effector at the origin of the moving frame, measured with respect to the moving frame.

If $\bm r_0$ is the center of mass of the end effector in the moving frame, then the twist about the center of mass is given by
\begin{equation}
\label{twist correction}
\varphi_0 = 
\varphi + \tfrac12\epsilon\bm w \times \bm r_0,
\end{equation}
where $\varphi = \eta^{-1} \dot \eta$, and the wrench applied about the center of mass is
\begin{equation}
\label{torque correction}
\tau_0 = 
\tau + 2 \bm p \times \bm r_0.
\end{equation}

The reason for introducing the factor $2$ in definition~\eqref{wrench as quaternion} is so that the rate of change of work done to the end effector is given by
\begin{equation}
\label{dot h tau varphi}
\frac{d}{dt} \text{(work done)} = \tau \cdot \varphi = \tau_0 \cdot \varphi_0 .
\end{equation}
(The second equality follows from vector identities.)

See \cite{ball} for the origins of the term twist and wrench as pairs of 3-vectors, which are examples of \emph{screws}.  The `work done' formulas are also known as \emph{reciprocal screw relationships}.

\section{The normalization of a dual quaternion}

A \emph{dual number} is anything of the form $a + \epsilon b$, where $a$ and $b$ are real numbers.  The \emph{norm} of a dual quaternion $\eta = A + \epsilon B$ is the dual number defined by the two steps:
\begin{gather}
|\eta|^2 = \eta^* \eta = \eta \eta^* = |A|^2 + 2 \epsilon (A \cdot B) ,\\
\label{norm}
|\eta| = \sqrt{|\eta|^2} = |A| + \epsilon (A \cdot B) / |A| .
\end{gather}
The norm preserves multiplication, that is, if $\eta_1$ and $\eta_2$ are two dual quaternions, then
\begin{equation}
|\eta_1 \eta_2| = |\eta_1| |\eta_2| .
\end{equation}

If $\eta = Q + \epsilon B$ is an invertible dual quaternion, then we define its \emph{normalization} to be the unit dual quaternion
\begin{equation}
\label{normalize}
\widehat \eta = |\eta|^{-1}\eta = \eta|\eta|^{-1}
= Q/|Q| + \epsilon (B/|Q| - (B\cdot Q) Q/|Q|^3) .
\footnote[2]{Note this formula is incorrect in the published version \cite{this}, and corrected in this paper \cite{this-correction}.  The authors are grateful to Brent Koogler for bringing this to our attention.  This correction makes only a small difference to the results reported in Subsection~\ref{s results}.  The corrected results may be found in \cite{this-correction}.}
\end{equation}
(We remark that the normalization of an invertible dual quaternion is used in the computer graphics industry \cite{kavan-et-al, kavan-et-al-2}.)  While this normalization formula might seem initially quite complicated, after thinking about it one can see that it is the simplest projection that enforces $|Q| = 1$ and $B\cdot Q = 0$.

The normalization also satisfies the following properties.
\begin{itemize}
\item If $\eta$ is a unit dual quaternion, then $\widehat\eta = \eta$.
\item Normalization preserves multiplication, that is, if $\eta_1$ and $\eta_2$ are two dual quaternions, then
\begin{equation}
\widehat{\eta_1 \eta_2} = \widehat \eta_1 \widehat \eta_2 .
\end{equation}
\end{itemize}

\section{Notation for three by three matrices}
\label{sec matrix notation}

Let $\mathsf I$ denote the $(3 \times 3)$ identity matrix, and $\mathsf 0$ denote the $(3 \times 3)$ zero matrix.  If $\bm r$ is a 3-vector, then the \emph{Hodge star operator} of $\bm r$ is
\begin{equation}
\starop{\bm r} = \begin{bmatrix}  0   & - r_3 &  r_2 \\
                               r_3 &  0   & - r_1 \\
                              - r_2 &  r_1 &  0 \end{bmatrix} .
\end{equation}
Note that
\begin{equation}
\label{r star to r cross}
\starop{\bm r} \bm s = \bm r \times \bm s .
\end{equation}
If $\bm u$ is a unit vector, then the \emph{projection onto the complement} of the unit vector $\bm u$ is defined by
\begin{equation}
\mathsf P_{\bm u} \bm x = \bm x - (\bm u \cdot \bm x) \bm u .
\end{equation}
Consistent with the identification~\eqref{vectdualquat ident}, if $\mathsf A$, $\mathsf B$, $\mathsf C$, and $\mathsf D$ are $(3 \times 3)$ matrices,  and $\theta = \tfrac12 \bm a + \tfrac12 \epsilon \bm b$, $\psi = \tfrac12 \bm c + \tfrac12 \epsilon \bm d$ are vector dual quaternions, we have
\begin{gather}
\begin{bmatrix}\mathsf A & \mathsf B\end{bmatrix}
\theta = \tfrac12 \mathsf A \bm a + \tfrac12 \mathsf B \bm b , \\
\begin{bmatrix}\mathsf A & \mathsf B\\
\mathsf C & \mathsf D
\end{bmatrix}
\theta =
\tfrac12 \mathsf A \bm a + \tfrac12 \mathsf B \bm b +
\tfrac12 \epsilon\mathsf C \bm a + \tfrac12 \epsilon \mathsf D \bm b, \\
\theta \cdot \begin{bmatrix}\mathsf A & \mathsf B\\
\mathsf C & \mathsf D
\end{bmatrix}
\psi =
\tfrac14 \bm a \cdot \mathsf A \bm c
+ \tfrac14 \bm a \cdot \mathsf B \bm d
+ \tfrac14 \bm b \cdot \mathsf C \bm c
+ \tfrac14 \bm b \cdot \mathsf D \bm d .
\end{gather}

\section{Lie derivatives}
\label{sec lie derivatives}

The notion of the Lie derivative, sometimes in our context called the directional derivative, is a combination of two ideas that may be found in the literature.  First is the concept of a Lie derivative with respect to a vector field \cite{yano}.  Secondly, the definition of the Lie algebra is that it is the vector space of vector fields that are invariant under left multiplication by elements of the Lie group \cite{lee}.  In this way, we can define the Lie derivative of a function with respect to an element of the Lie algebra.  One place in the literature where they are combined is in \cite[Equation~(5), Chapter~II]{helgason}.

These standard abstract definitions can be made more concrete in our special case where the Lie group is the set of unit dual quaternions, and the Lie algebra is the set of vector dual quaternions.

Suppose one has a quantity that is a function of pose $g(\eta)$, and whose range is any vector space.  (But for intuition, consider the case when the range is the real numbers, and think of the vector valued case as the formulas below simply being applied component wise.)

Then we usually think of the derivative of $g(\eta)$ as the Jacobian with respect to the components of $\eta$.  But it really makes more sense to compute the derivative with respect to the components of the perturbation of $\eta$.  The latter is the Lie derivative.

The definition is this.  Given a differentiable function $g$ whose domain is the unit dual quaternions, $\setunitdualquat$, and whose codomain is any vector space over the real numbers, we can extend it arbitrarily to a differentiable function whose domain is an open neighborhood of $\setunitdualquat$ in $\setdualquat$.  Given a unit dual quaternion $\eta$ and a vector dual quaternion $\theta$, we define the \emph{Lie derivative} of $g(\eta)$ in the direction of $\theta$ to be
\begin{equation}
\label{lie diff defn}
\liederiv_\theta g = 
\lim_{r\to 0} \frac{g(\eta(1+r \theta)) - g(\eta)} r = 
\left. \frac{d}{d r} g(\eta(1+r\theta)) \right |_{r = 0}.
\end{equation}
Since $\eta(1+r\theta)$ isn't necessarily a unit dual quaternion, it is not obvious that the definition of the directional Lie derivative doesn't depend upon how the domain of $g$ was extended from $\setunitdualquat$, but it is, as is shown in Lemma~\ref{not depend} below.

Given a generic function $g$ whose domain is the invertible dual quaternions, $\setinvertibledualquat$, and whose codomain is any vector space over the real numbers, we define its Jacobian to be the dual quaternion
\begin{equation}
\label{jacobian defn}
\frac{\partial g}{\partial \eta} = \sum_{i=1}^8 \left[\frac{\partial}{\partial \eta_i} g(\eta) \right] \beta_i,
\end{equation}
where the partial derivative $\dfrac{\partial}{\partial \eta_i}$ is interpreted using the identification of $\eta_i$ with the $\beta_i$ components of $\eta$ as described in \eqref{dual quaternion basis}.

If the domain of $g$ is the vector dual quaternions, $\setvectordualquat$, we have equation~\eqref{jacobian defn} except with $8$ replaced by $6$.

We have the following formula, which is useful for explicitly calculating the Lie derivative if an extension of $g$ to an open neighborhood of $\setunitdualquat$ in $\setdualquat$ is known:
\begin{equation}
\label{lie diff defn 2}
\liederiv_\theta g = \frac{\partial g}{\partial \eta} \cdot (\eta \theta) := \sum_{i=1}^8 \left(\frac{\partial g}{\partial \eta_i}\right) (\eta \theta)_i ,
\end{equation}
where the notation $(\eta\theta)_i$ means the $\beta_i$ component of $\eta\theta$, as described in \eqref{dual quaternion basis}.

We define the \emph{partial Lie derivatives} to be
\begin{equation}
\liederiv_i g = \liederiv_{\beta_i} g , \quad (1 \le i \le 6),
\end{equation}
and its \emph{full Lie derivative} to be the vector dual quaternion (or if the range of $g$ is a vector space, the tensor product of a vector dual quaternion with a vector)
\begin{equation}
\liederiv g = \sum_{i=1}^6 \liederiv_i g \beta_i,
\end{equation}
so that for all vector dual quaternions $\theta$ it satisfies:
\begin{equation}
\theta \cdot \liederiv g = \liederiv_\theta g.
\end{equation}

To gain some intuition, write
\begin{equation}
\label{decomp theta}
\theta = \tfrac12\bm a + \tfrac12\epsilon \bm b.
\end{equation}
Since we have that
\begin{equation}
\label{decomp partial theta}
\frac{\partial}{\partial \theta} = 2\frac{\partial}{\partial \bm a} + 2\epsilon \frac{\partial}{\partial \bm b},
\end{equation}
we see that $\theta$ represents a change in pose by an infinitesimal translation $\bm b$ and an infinitesimal rotation $\bm a$, measured in the moving frame of reference.  Thus $\liederiv g$ is a vector dual quaternion giving twice the change in $g$ with respect to an infinitesimal rotation, plus $\epsilon$ times twice the change of $g$ with respect to an infinitesimal translation.

One important property of the Lie derivative is that if $\eta$ represents a pose, with twist $\varphi$, then by equation~\eqref{lie diff defn}, with $r$ replaced by $t$, we see that
\begin{equation}
\label{dot f Lie phi}
\frac d{dt} [g(\eta)] = \liederiv_\varphi g .
\end{equation}

The Lie derivative satisfies various rules, which easily follow from either equations~\eqref{lie diff defn} or ~\eqref{lie diff defn 2}, which are also useful for explicitly calculating the Lie derivative when $g$ is known.
\begin{itemize}
\item If $g(\eta)$ is linear in $\eta$, then
\begin{equation}
\label{rule linear}
\liederiv_\theta g(\eta) = g(\eta \theta).
\end{equation}
\item The product rule: if $*$ is any binary operator which is bilinear over the real numbers, such as the product of real numbers, the inner product, the cross product,
or the dual quaternion product, then
\begin{equation}
\label{rule product}
\liederiv_\theta (g_1 * g_2) = g_1 * (\liederiv_\theta g_2) + (\liederiv_\theta g_1) * g_2.
\end{equation}
\item The chain rule:
\begin{equation}
\label{rule chain}
\liederiv_\theta (h(g_1, g_2,\dots,g_m)) \\ = \sum_{i=1}^m
\frac\partial{\partial g_i} h(g_1, g_2,\dots,g_m) \liederiv_\theta g_i .
\end{equation}
\item
Let $\tilde{\bm s}$ be a constant position vector, and $\tilde{\bm n}$ be a constant direction.  Let $\bm s$ and $\bm n$ be their corresponding values with respect to the moving frame.  Then
\begin{align}
\label{lie deriv vector 1}
\mathcal L_\theta \bm s &= 2 \begin{bmatrix} \starop{\bm s} & - \mathsf I \end{bmatrix} \theta , \\
\label{lie deriv vector 2}
\mathcal L_\theta \bm n &= 2 \begin{bmatrix} \starop{\bm n} & \mathsf 0 \end{bmatrix} \theta .
\end{align}
\end{itemize}
To simplify the writing of application software, Using these rules, one can create a software library in C++ that performs automatic Lie differentiation.  Since the domain, and hence range, of the Lie derivative can be any vector space, the sensible way to do this is using templates to allow for a variety of different data types.  Note also that the product rule~\eqref{rule product} has to be implemented for every product that is used, and similarly the chain rule~\eqref{rule chain} for every function $h$ that is used.

\section{Applications to parallel robots}

\begin{figure}
\includegraphics[scale=0.4,trim=240 10 140 10,clip]{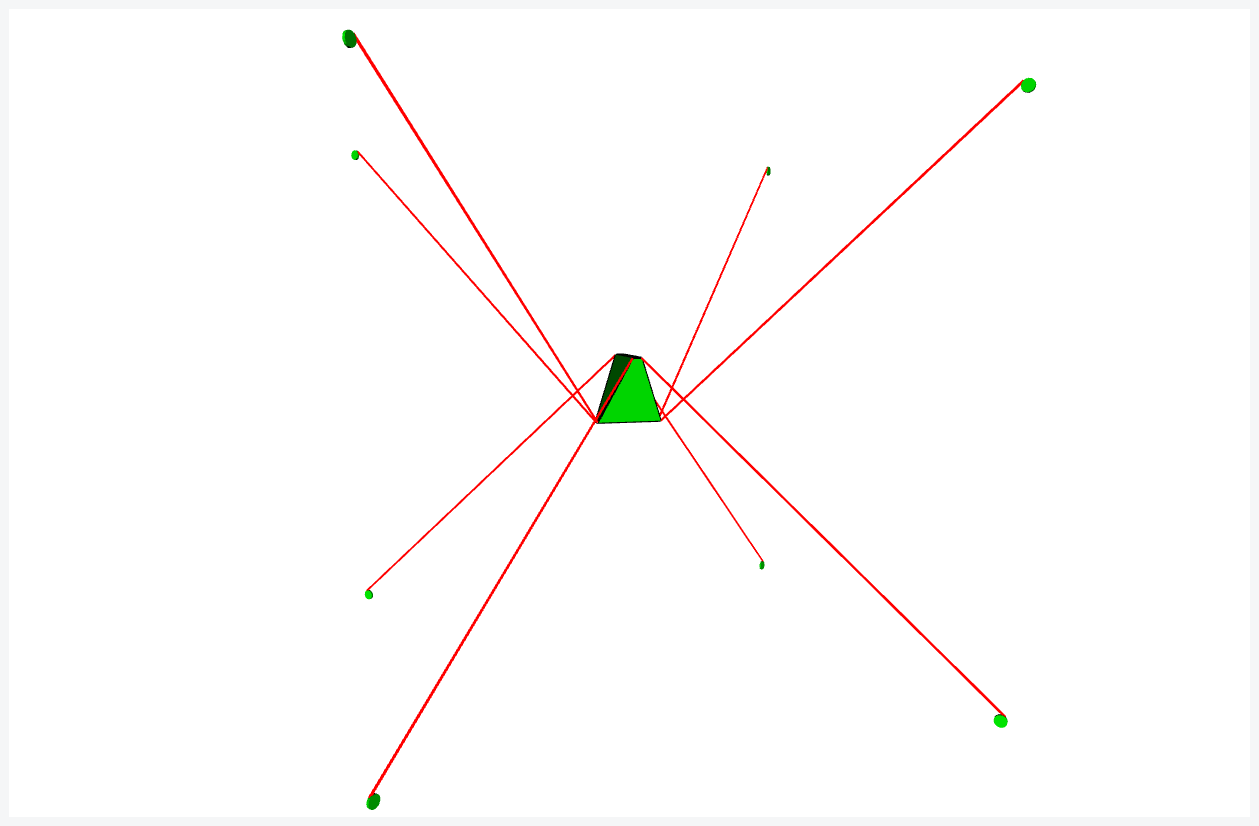}
\caption{Schematic of a cable-driven parallel robot.}
\label{cable-driven parallel robot}
\end{figure}

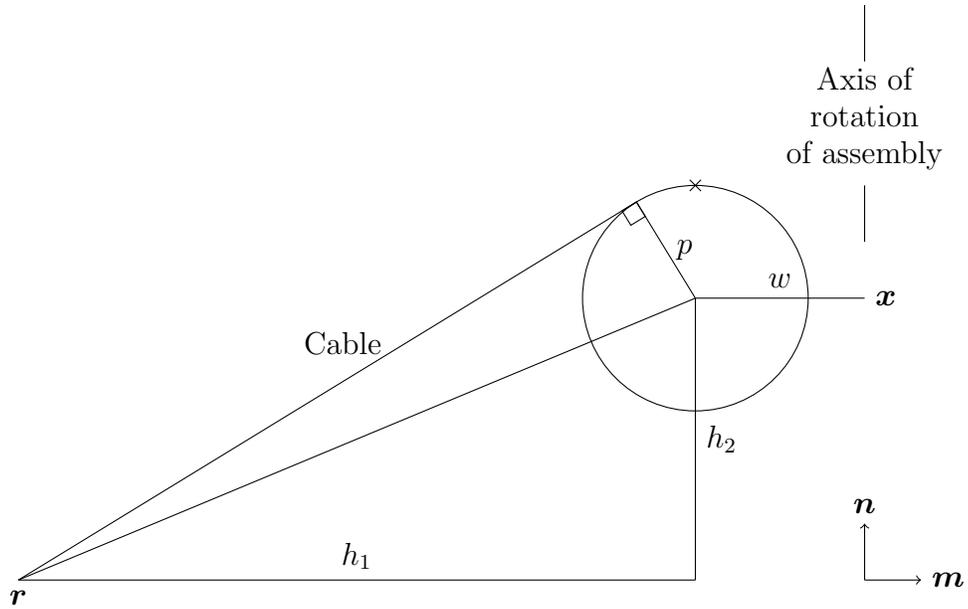
\begin{figure}

\begin{tikzpicture}[scale=0.75]
\node[draw, circle, minimum size=3cm] at (0,0) {};
\draw[-] (0.1,2.1) -- (-0.1,1.9);
\draw[-] (-0.1,2.1) -- (0.1,1.9);
\draw[-] (0,0) -- (-12,-5);
\draw[-] (0,0) -- node[right] {$p$} (-1.0441,1.70583);
\draw[-] (-1.0441,1.70583) -- (-12, -5);
\node[below] at (-12,-5) {$\boldsymbol r$};
\draw[rotate=31.4699] (0,2) rectangle (-0.3,1.7);
\node at (-6.25,-0.8) {Cable};
\draw[-] (-12,-5) -- node[above] {$h_1$} (0,-5);
\draw[-] (3,4.2) -- (3,5.2);
\draw[->] (3,-5) -- (3,-4) node[above] {$\boldsymbol n$};
\draw[->] (3,-5) -- (4,-5) node[right] {$\boldsymbol m$};
\draw[-] (3,1) -- (3,2) node[above] {\minibox[c]{Axis of\\rotation\\of assembly}};
\draw[-] (0,0) -- node[right] {$h_2$} (0,-5);
\draw[-] (3,0) node[right]{$\bm x$} -- node[above]{$w$} (0,0) ;
\end{tikzpicture}

\caption{The pulley and attached cable}
\label{pulley}
\end{figure}

To aid our description, we show the example of a cable-driven parallel robot in Figures~\ref{cable-driven parallel robot} and~\ref{pulley}.  In Figure~\ref{cable-driven parallel robot}, we show the end effector in green, controlled by eight cables shown in red, which go around pulleys shown in green.  Each cable, and how it is attached to the pulley, is shown in Figure~\ref{pulley}.

Suppose that the position of the end effector of a parallel robot is given by $n$ actuators, described by quantities
\begin{equation}
\bm \ell = (\ell_j)_{1 \le j \le n} .
\end{equation}
For example, for a cable-driven parallel robot, these represent the lengths of the cables, and typically $n = 8$.  The numbers $\ell_j$ only need to be determined up to a fixed number.  Thus, for example, in Figure~\ref{pulley}, the number $\ell_j$ could represent the length of cable from the end effector attachment point, to the small cross marked at the top of the pulley.

For the Gough or Stewart Platforms \cite{gallardo-alvarado}, we have $n = 6$, and $\ell_j$ is simply the distance between the end effector attachment point and the ground attachment point.

Let us also denote the force exerted by the actuators by 
\begin{equation}
\bm f = (f_j)_{1 \le j \le n} ,
\end{equation}
defined so that the rate of change of work performed through the actuators is given by
\begin{equation}
\label{dot h f ell}
\frac{d}{dt} \text{(work done)} = \bm f \cdot \dot{\bm\ell} .
\end{equation}

Suppose we have a function $\mathsf L : \setunitdualquat \to \mathbb R^n$, which calculates the required actuator values, $\bm \ell$, from the pose $\eta$ of the end effector frame.  This is the \emph{inverse kinematics} function.

We also define the $(n \times 6)$ matrix $\mathsf \Lambda$ by
\begin{equation}
\label{Lambda}
\mathsf \Lambda \theta = \liederiv_\theta \mathsf L = \liederiv_\theta \bm \ell ,
\end{equation}
or more explicitly, by
\begin{equation}
\mathsf \Lambda_{i,j} = \liederiv_j \mathsf \ell_i .
\end{equation}
From equation~\eqref{dot f Lie phi} we obtain
\begin{equation}
\label{dot l Lambda phi}
\dot{\bm \ell} = \mathsf \Lambda \varphi .
\end{equation}

There is also a ($6 \times n$) matrix $\mathsf T$ that maps the actuator forces to the wrench dual quaternion:
\begin{equation}
\label{T}
\tau = \mathsf T \bm f .
\end{equation}
This can be computed by balancing the force and torque exerted upon the end-effector.  But it can also be computed with the following important identity:
\begin{equation}
\label{T=L^T}
\mathsf T = \mathsf \Lambda^T.
\end{equation}

\section{Second lie derivatives}

If $g$ is a function of dual quaternions with codomain any vector space over the real numbers, we define its Hessian to be the $(8 \times 8)$ matrix
\begin{equation}
\left[\frac{\partial^2 g}{\partial \eta^2}\right] = \left[\frac{\partial^2 g}{\partial \eta_i\partial \eta_j}\right] _{ 1 \le i,j \le 8 }.
\end{equation}
Thus the expression $\left[\frac{\partial^2 g}{\partial \eta^2}\right] \gamma$ should be interpreted as a matrix product with $\gamma$ treated as an eight dimensional column vector.

Second Lie derivatives will be used in Newton's Method, as well as in our statements of the equations of motion, both described below.

We have that
\begin{equation}
\label{second lie deriv}
\begin{aligned}
\liederiv_\theta \liederiv_\psi g &= (\eta \psi) \cdot \left[\frac{\partial^2 g}{\partial \eta^2}\right](\eta \theta) + \frac{\partial g}{\partial \eta} \cdot (\eta \theta \psi) \\
&:= \sum_{i=1}^8 \sum_{j=1}^8 (\eta \psi)_i \left(\frac{\partial^2 g}{\partial \eta_i \partial \eta_j}\right)(\eta \theta)_j + \sum_{i=1}^8 \left(\frac{\partial g}{\partial \eta_i}\right) (\eta \theta \psi)_i .
\end{aligned}
\end{equation}

Another way one might try to define the second derivative is to use the formula $\left.\frac {\partial^2}{\partial\theta^2} g(\eta(1+\theta)) \right | _{\theta=0}$.  Unfortunately, this definition doesn't work, as it depends upon the choice of how to extend the domain of $g$ to all dual quaternions.  The obvious choice of extension is to use the normalization $\tilde g(\eta) = g(\hat\eta)$.  We have the following formula for the Hessian of $\tilde g$:
\begin{equation}
\label{hessian}
\left.\frac{\partial^2}{\partial\theta^2} \tilde g(\eta(1+\theta)) \right|_{\theta = 0} = \left[\tfrac12 (\liederiv_i \liederiv_j g(\eta) + \liederiv_j \liederiv_i g(\eta)) \right]_{1\le i,j\le 6}.
\end{equation}

\section{The examples of a Stewart Platform, and a cable-driven parallel robot}
\label{section examples}

As an example, let us consider parallel robots such as cable-driven robots, or Stewart Platforms.  In this case, the end effector has certain `attachment points' on it, $\bm r_1$, $\bm r_2,\dots$, $\bm r_n$ where the cables or legs attach, the cables or legs are attached at the other end to $\bm s_k$,
and unit vectors $\bm u_1$, $\bm u_2,\dots$, $\bm u_n$ which are directions the cables or legs come into the end effector, all of these being measured in the end effector's frame of reference.  Note that for $1 \le k \le n$
\begin{equation}
\bm u_k = \frac{\bm r_k - \bm s_k}{|\bm r_k - \bm s_k|} .
\end{equation}
Then
\begin{equation}
\label{L ell_k}
\mathcal L \ell_k = 2 \bm r_k \times \bm u_k + 2 \epsilon \bm u_k = 2 \begin{bmatrix} \starop{\bm r_k} \\ \mathsf I \end{bmatrix} \bm u_k,
\end{equation}
or representing vectors as columns, we have
\begin{equation}
\label{Lambda stewart platform}
\mathsf\Lambda = 2 \begin{bmatrix}
(\bm r_1 \times \bm u_1)^T & \bm u_1^T \\
(\bm r_2 \times \bm u_2)^T & \bm u_2^T \\
\vdots & \vdots \\
(\bm r_n \times \bm u_n)^T & \bm u_n^T \\
\end{bmatrix} .
\end{equation}

For calculating the second Lie derivative of $\ell_k$, we need only know the first Lie derivative of $\bm u_k$.  Note that in the simple case that the attachment point of the cable to the frame is fixed in the fixed frame (for example, as in a Stewart Platform), we have
\begin{equation}
\label{L u_k - fixed}
\mathcal L \bm u_k =
\frac1{|\bm r_k - \bm s_k|} \ 
\mathsf P_{\bm u_k}
\begin{bmatrix} - \starop{\bm s_k} & \mathsf I
\end{bmatrix} ,
\end{equation}
and therefore
\begin{equation}
\label{L^2 ell_k - fixed}
\mathcal L^2 \ell_k =
2 \begin{bmatrix} \starop{\bm r_k} \\ \mathsf I \end{bmatrix}
\frac1{|\bm r_k - \bm s_k|} \ 
\mathsf P_{\bm u_k}
\begin{bmatrix} - \starop{\bm s_k} & \mathsf I
\end{bmatrix} .
\end{equation}

Let us also describe a more complicated situation, which matches the cable-driven parallel robot the NASA Johnson Space Center described in the introduction.  For simplicity of notation, we drop the subscripts $k$.  See Figure~\ref{pulley} for reference.  Let us suppose that the cable attaches via an assembly, which is free to rotate about an axis parallel to the unit vector $\bm n$, and passing through the point $\bm x$.  Attached to the assembly, at a fixed distance $w$ from the fixed point $\bm x$, by a rod perpendicular to $\bm n$, is the center of a pulley of radius $p$, which rotates in the plane containing the axis of rotation and the point on the end effector $\bm r$.  A cable passes over the pulley in the $\bm n$ direction from the center of the pulley.  All of the vectors are expressed in the moving frame of reference.

We work in the $(\bm m, \bm n)$ coordinate system in which the origin is $\bm r$.  Let $(h_1,h_2)$ be the coordinates of the center of the pulley.  Then
\begin{equation}
\bm u = u_1 \bm m + u_2 \bm n,
\end{equation}
where
\begin{equation}
(u_1, u_2) = - \frac1{(h_1^2+h_2^2)}
\begin{bmatrix}
h_1 & -h_2 \\
h_2 & h_1
\end{bmatrix}
\begin{bmatrix}
\sqrt{h_1^2+h_2^2 - p^2} \\ p
\end{bmatrix}.
\end{equation}
The cable length, up to an additive constant, is
\begin{equation}
\ell = \sqrt{h_1^2 + h_2^2 - p ^2} + p \tan^{-1}\left(\frac{u_2}{u_1}\right).
\end{equation}
To avoid singularities, the optimal way to compute the inverse tangent, at least with the configuration shown in Figure~\ref{pulley}, is to use $\text{atan2}(-u_2, -u_1)$, where the commonly available $\text{atan2}(y,x)$ function solves for $\theta$ where $x = r \cos\theta$, $y = r \sin\theta$, $ r > 0$, $-\pi < \theta \le \pi$.

The various required quantities are
\begin{gather}
\bm d = \mathsf P_{\bm n} (\bm x - \bm r) , \\ 
\bm m = \widehat{\bm d} = \frac{\bm d}{|\bm d|}, \\
h_2 = \bm n \cdot (\bm x - \bm r) , \\
h_1 = \bm m \cdot (\bm x - \bm r) - w = \sqrt{|\bm x - \bm r|^2 - h_2^2} - w .
\end{gather}
The Lie derivatives of these quantities can be calculated using the automatic Lie differentiation described at the end of Section~\ref{sec lie derivatives}.  The only second Lie derivative required is that of $\ell$, and since we have equation~\eqref{L ell_k}, no automatic second Lie differentiation is required.

\subsection{Singularity analysis for Stewart Platforms}

When operating a Stewart Platform, a singularity occurs when there are no viable cable forces that can create an arbitrary wrench, or equivalently, when there exists infinitesimal perturbations of the end effector pose that don't require a leg length change.  These singularities are often called bifurcations, because after a Stewart Platform encounters a singularity, the end effector is free to move in more than one direction.  Encountering a singularity can cause great damage to the Stewart Platform.

From these considerations, it becomes clear that a singularity happens for a Stewart Platform if and only if $\det(\mathsf \Lambda) = \det(\mathsf T) = 0$.  This is in agreement with the results obtained by Gosselin and Angeles \cite{gosselin-angeles}.

Note that when considering more complex parallel robots, that a singularities can happen in other situations as well (see, for example, \cite{merlet}).

\section{Forward kinematics}
\label{sec forward kinematics}

In robotics, there are several ways to find the pose of the end effector.  One method is to use an optical system, but this is not very accurate.  Another is to use a proprioceptive sensor, where the pose is found by integrating the acceleration and angular velocity of the end effector, but this is subject to drift.  It would be extremely helpful if the end effector could be computed from the numbers $\ell_k$.  These numbers lengths can be found easily and with high sampling frequency using, for example, encoders.

Let the \emph{set of admissible actuator values}, $\mathbb L \subset \mathbb R^n$, be the range of the function $\mathsf L$.  Then the \emph{forward kinematics} function is
\begin{equation}
\mathsf Y : \mathbb L \to \setunitdualquat,
\end{equation}
 which is a left inverse to $\mathsf L$.  Because of possible measurement errors, $\mathsf Y$ should produce decent answers even if the actuator values are merely close to $\mathbb L$.

The methods described here are essentially the Newton-Raphson Method, and are all iterative methods.  Given a guess $\eta_k$, we create a new guess $\eta_{k+1}$:
\begin{equation}
\eta_{k+1} = \eta_k \widehat{(1 + \theta_{k+1})},
\end{equation}
and iterate until some criterion is met.  We measure $\size_l(\eta_{k+1} - \eta_k)$, and see when it is smaller than some preset value, like $10^{-16}$.

Here we merely describe the method.  In Section~\ref{sec proofs}, we explain why they work.

We would also like to mention a different approach to using dual quaternions to solve forward kinematics problems in \cite{yang-et-al}, although we don't think it will cover the more complicated situation described in Section~\ref{section examples}.

\subsection{Forward kinematics for Stewart Platforms}
\label{forward kinematics stewart}

First we describe how to solve the exact-constrained problem, that is, when $n= 6$.  This would be the case for a Stewart Platfom.  Typically this is solved by writing the pose using Euler angles, which provides a way to represent the pose using a 6-vector.  However, in the opinion of the authors, this becomes a rather complicated set of equations, resulting in quite lengthy code.

Our algorithm is
\begin{equation}
\label{exact constrained Newton-Raphson}
\theta_{k+1} = - \mathsf \Lambda(\eta_k)^{-1} (\mathsf L(\eta_k) - \bm\ell) .
\end{equation}
This is easy to code, certainly simpler than methods which use Euler angles.

\subsection{Forward kinematics for over-constrained parallel robots}
\label{forward kinematics}

Next we focus on the over-constrained problem, that is, when the number of actuators $n$ is greater than $6$.

This problem has been solved by many others, for example, \cite{pott-schmidt,yang-et-al}.  But we feel that this is more easily solved using dual quaternions.  For example, using the programming language C++, one can quickly build classes representing dual quaternions, and then these formulas can be applied without any real thought.

We compute
\begin{equation}
\delta_k = \left(\sum_{m=1}^n \mathsf \Lambda_{m,i} (\mathsf L_m(\eta_k) - \ell_m)\right)_{1 \le i \le 6},
\end{equation}
and
\begin{equation}
\mathsf H_k
= \left[\sum_{m=1}^n \mathsf\Lambda_{m,i} \mathsf\Lambda_{m,j} + \tfrac12 \mathcal L_i \mathsf \Lambda_{m,j} (\mathsf L_m(\eta_k)) + \tfrac12 \mathcal L_j \mathsf \Lambda_{m,i} (\mathsf L_m(\eta_k))
\right]_{1\le i,j\le 6}.
\end{equation}
The algorithm is
\begin{equation}
\label{over constrained Newton-Raphson}
\theta_{k+1} =  \mathsf H_k^{-1} \delta_k .
\end{equation}

\subsection{Results of simulations for forward kinematics}
\label{s results}

The software used to check these algorithms described here is currently proprietary, and so we cannot give too many details.  We hope to get permission to release the software at a later date, and make it available to everyone.

To test the simple Stewart Platform algorithm, we created 10,000 random poses.  The orientation of each pose had an angle no greater than $30^\circ$ from the identity pose.  For each pose, the cable lengths were calculated.  The Newton-Raphson Method was then applied to the cable lengths, with a random initial guess pose.

All runs were successful.  The average number of required iterations was about 4.8.  The run time for each forwards kinematics calculation was a little under 100 microseconds, using a fairly modern but low-end laptop.  Increasing the allowed angle to $45^\circ$ gave a failure rate of 2 in 10,000.

The Newton-Raphson Method for the over-constrained robot given by the more complicated situation described in Section~\ref{section examples} was more delicate.  In particular, since it is minimizing a loss function rather than directly solving the problem, it is possible that it might find local minima of the loss function which didn't correspond to the solution.

For the first test, we created 1,000 random poses, and computed their cable lengths.  The Newton-Raphson Method was applied, with a random initial guess pose, and was allowed up to 50 iterations.  If the loss function of the final answer was greater than $10^{-16}$, the Newton-Raphson Method was applied again.

All poses were found, but the average number of times the Newton-Raphson had to be applied was about 80.  The average time to find a pose was about 5 milliseconds.

For the second test, we again created 1,000 random poses and computed their cable lengths.  Then the Newton-Raphson was applied, with an initial guess that was 1\% different from the original pose.  Again success was measured by computing the loss function.  But there were no second chances.

All poses were found, the average number of iterations required was 4.2, and the average time taken was a little under 100 microseconds.

Increasing the allowable difference between the initial guess and the original pose to 5\% resulted in only about 88\% of the runs being successful.

Note that in all these trials, since the poses were randomly created, it is quite likely that some of the poses were non-feasible for the parallel robots under consideration.  (For example, it might be impossible to maintain that pose while keeping the cables under tension, or moving to that pose might require crossing cables.)

When using this algorithm, to find the initial pose, we suggest to use the first method of trying 1,000 different initial guesses, and choosing the pose with the lowest loss function.  But thereafter, sample the cable lengths often, and use the previously measured pose as the initial guess.  In our numerical simulations, the pose changes by about 0.3\% per time step (about 1 millisecond), and the algorithm has never failed to solve the forward kinematics problem.

\section{Dynamics of the end effector}
\label{dynamics}

The dynamics equations of motion of rigid bodies is well known.  But in this section, we also consider the additional effect of overcoming inertia from the actuators.  Furthermore, we feel that it is nice to see this stated and derived in the context of dual quaternions.

Let us suppose that the kinetic energy of the parallel robot is given by
\begin{equation}
\label{ke}
e = \tfrac12 \varphi \cdot \mathsf M \varphi,
\end{equation}
where $\mathsf M$ is a $(6 \times 6)$ positive definite matrix, which depends only upon $\eta$, and which we call the \emph{effective mass of the parallel robot}.

We define the \emph{no-load forces} to be the actuator forces if there is no end effector present:
\begin{equation}
\bm f_0 = \mathsf M_0 \ddot{\bm \ell},
\end{equation}
where $\mathsf M_0$ is a positive definite $(n\times n)$ matrix denoting what we shall call the \emph{effective no-load mass of the actuators}.  This might be caused by, for example, the reflected moment of inertia of the motor that drives each actuator, in which case $\mathsf M_0$ is simply a constant multiple of the identity.  Since $\tfrac12 \dot{\bm\ell} \cdot \mathsf M_0 \dot{\bm\ell} = \tfrac12 \varphi \cdot \mathsf\Lambda^T \mathsf M_0 \mathsf \Lambda \varphi$ is part of the kinetic energy, it follows that $\mathsf M - \mathsf \Lambda^T \mathsf M_0 \mathsf \Lambda$ is a positive semi-definite matrix.

If $m_e$ is the mass of the end effector, $\mathsf M_e$ is the moment of inertia tensor of the end effector about its center of mass, and $\bm r_0$ is the center of mass of the end effectors, all measured with respect to the moving frame, then
\begin{equation}
\label{ke written in full}
e = \tfrac12 m_e |\bm v + \bm w \times \bm r_0|^2 + \tfrac12 \bm w \cdot \mathsf M_e \bm w + \tfrac12 \dot{\bm\ell} \cdot \mathsf M_0 \dot{\bm\ell} , 
\end{equation}
that is
\begin{equation}
\label{M example}
\mathsf M = 4 \begin{bmatrix} \mathsf M_e - m_e \starop{\bm r_0}^2 & m_e \starop{\bm r_0} \\ - m_e \starop{\bm r_0} & m_e \mathsf I \end{bmatrix} + \mathsf\Lambda^T \mathsf M_0 \mathsf\Lambda.
\end{equation}

\begin{Theorem}
\label{equation of motion}
If the kinetic energy satisfies equation~\eqref{ke} with equation~\eqref{M example} holding, and the potential energy $v$ is calculated in the usual manner from the mass of the end effector in a constant gravitational field whose value is $\bm g$ measured with respect to the moving frame, then the equation of motion is
\begin{equation}
\tau = \mu + \mathsf M \alpha,
\end{equation}
where
\begin{multline}
\label{tau example}
\mu = 2\bm w \times (\mathsf M_e \bm w) + 2\epsilon m_e \bm w \times \bm v \\
+ 2 m_e ((\bm w \cdot \bm r_0) (\bm w \times \bm r_0) + \bm r_0 \times (\bm w \times \bm v) + \epsilon \bm w \times (\bm w \times \bm r_0)) \\
+ \mathsf \Lambda^T \mathsf M_0 (\liederiv_\varphi \mathsf \Lambda) \varphi 
- 2 m_e (\bm r_0 \times \bm g + \epsilon \bm g) ,
\end{multline}
and
\begin{equation}
\mathsf M \alpha = 2 \mathsf M_e \dot{\bm w} + 2 m_e (\bm r_0 \times \dot{\bm v}) + 2 \epsilon m_e \dot{\bm v} + 2 \epsilon m_e (\dot{\bm w} \times \bm r_0) + \mathsf\Lambda^T \mathsf M_0 \mathsf \Lambda \alpha.
\end{equation}
\end{Theorem}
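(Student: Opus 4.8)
The plan is to treat the unit dual quaternions as a Lie group and derive the equation of motion from the Lagrange--d'Alembert principle with Lagrangian $L = e - v$, where the wrench $\tau$ enters as the generalized force whose virtual work against a vector dual quaternion $\theta$ is $\tau\cdot\theta$, in keeping with the power identity~\eqref{dot h tau varphi}. I would parametrize variations of a path $\eta(t)$ by left translation, $\delta\eta = \eta\theta$ with $\theta$ a time-dependent vector dual quaternion vanishing at the endpoints, since this is precisely the perturbation used to define the Lie derivative. The first step is the kinematic identity
\begin{equation}
\delta\varphi = \dot\theta + \varphi\theta - \theta\varphi ,
\end{equation}
which follows by varying $\varphi = \eta^{-1}\dot\eta$ and using $\tfrac{d}{dt}\eta^{-1} = -\eta^{-1}\dot\eta\,\eta^{-1}$, exactly as in the change-of-frame computations of the Notation section.

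Substituting this into $\delta L = \varphi\cdot\mathsf M\,\delta\varphi + \tfrac12\varphi\cdot(\liederiv_\theta\mathsf M)\varphi - \liederiv_\theta v$, integrating the term containing $\dot\theta$ by parts (the boundary terms vanishing), and demanding that $\int(\delta L + \tau\cdot\theta)\,dt = 0$ for every $\theta$, produces the master equation: for all vector dual quaternions $\theta$,
\begin{equation}
\tau\cdot\theta = \tfrac{d}{dt}(\mathsf M\varphi)\cdot\theta - \varphi\cdot\mathsf M(\varphi\theta - \theta\varphi) - \tfrac12\varphi\cdot(\liederiv_\theta\mathsf M)\varphi + \liederiv_\theta v .
\end{equation}
This is the Euler--Poincar\'e equation for our group; the remaining work is to expand each of its four terms into the explicit $3$-vectors of~\eqref{tau example}.

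I would then split $\mathsf M$ according to~\eqref{M example} into the constant rigid-body block and the pose-dependent actuator block $\mathsf\Lambda^T\mathsf M_0\mathsf\Lambda$. For the rigid block, the constants $m_e$, $\mathsf M_e$, $\bm r_0$ are measured in the moving frame, so $\liederiv_\theta$ of this block vanishes; then $\tfrac{d}{dt}(\mathsf M\varphi)$ supplies the acceleration terms $2\mathsf M_e\dot{\bm w}$, $2m_e(\bm r_0\times\dot{\bm v})$, $2\epsilon m_e\dot{\bm v}$, $2\epsilon m_e(\dot{\bm w}\times\bm r_0)$ of $\mathsf M\alpha$, while the commutator term $-\varphi\cdot\mathsf M(\varphi\theta - \theta\varphi)$ supplies the velocity-quadratic gyroscopic terms $2\bm w\times(\mathsf M_e\bm w)$, $2\epsilon m_e\bm w\times\bm v$, and the center-of-mass cross products; these are extracted by writing $\varphi\theta - \theta\varphi$ for $\varphi = \tfrac12\bm w + \tfrac12\epsilon\bm v$ using the quaternion product and~\eqref{r star to r cross}. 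The gravitational term $\liederiv_\theta v$ is computed directly from~\eqref{lie deriv vector 1} applied to the moving-frame position of the center of mass, yielding $-2m_e(\bm r_0\times\bm g + \epsilon\bm g)$, which one can sanity-check against the static ($\varphi = \alpha = 0$) balance.

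The main obstacle is the actuator block, where all three of $\tfrac{d}{dt}(\mathsf M\varphi)$, the commutator term, and $\tfrac12\varphi\cdot(\liederiv_\theta\mathsf M)\varphi$ are nonzero. Using $\dot{\bm\ell} = \mathsf\Lambda\varphi$ from~\eqref{dot l Lambda phi} and $\dot{\mathsf\Lambda} = \liederiv_\varphi\mathsf\Lambda$ from~\eqref{dot f Lie phi}, the time derivative expands as $\tfrac{d}{dt}(\mathsf\Lambda^T\mathsf M_0\mathsf\Lambda\varphi) = (\liederiv_\varphi\mathsf\Lambda)^T\mathsf M_0\mathsf\Lambda\varphi + \mathsf\Lambda^T\mathsf M_0(\liederiv_\varphi\mathsf\Lambda)\varphi + \mathsf\Lambda^T\mathsf M_0\mathsf\Lambda\alpha$. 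Since $\mathsf\Lambda_{m,i} = \liederiv_i\ell_m$, the stray term $(\liederiv_\varphi\mathsf\Lambda)^T\mathsf M_0\mathsf\Lambda\varphi$ and the derivative-of-$\mathsf M$ term both involve second Lie derivatives of the $\ell_m$ taken in the two orders $\liederiv_\varphi\liederiv_\theta$ and $\liederiv_\theta\liederiv_\varphi$; their difference is exactly the commutator term, because $[\liederiv_\varphi,\liederiv_\theta]\ell_m = \liederiv_{\varphi\theta - \theta\varphi}\ell_m$. This commutator identity --- the infinitesimal content of the symmetry statement~\eqref{hessian} --- is what makes the three actuator contributions collapse to the single term $\mathsf\Lambda^T\mathsf M_0(\liederiv_\varphi\mathsf\Lambda)\varphi$ in $\mu$ together with $\mathsf\Lambda^T\mathsf M_0\mathsf\Lambda\alpha$ in $\mathsf M\alpha$; equivalently, the surviving actuator wrench is $\mathsf\Lambda^T\mathsf M_0\ddot{\bm\ell}$, matching the no-load force picture. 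Verifying this cancellation cleanly, with all factors and signs, is the crux of the argument.
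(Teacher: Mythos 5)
Your proposal is correct and follows essentially the same route as the paper: your ``master equation'' is term-for-term the paper's Theorem~\ref{euler-lagrange} (your commutator term is the paper's $\mu_2 = 2(\mathsf M\varphi)\ltimes\varphi$), and the remaining steps --- splitting $\mathsf M$ into the constant rigid-body block plus the actuator block $\mathsf\Lambda^T\mathsf M_0\mathsf\Lambda$, evaluating gravity via the Lie derivatives of moving-frame vectors as in Lemma~\ref{lambda pot energy}, and collapsing the actuator contributions using $\liederiv_\varphi\liederiv_\theta - \liederiv_\theta\liederiv_\varphi = \liederiv_{(\varphi\theta-\theta\varphi)}$, i.e.\ equation~\eqref{lie bracket diff = diff lie bracket} --- are exactly the paper's. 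The only cosmetic difference is that you obtain the master equation from the Lagrange--d'Alembert principle with left-translated variations $\delta\eta = \eta\theta$, whereas the paper applies the classical Euler--Lagrange equation in the local chart $\theta \mapsto \eta_0\widehat{(1+\theta)}$; the paper also records your closing observation that the actuator terms are most easily obtained by differentiating $\mathsf M_0\dot{\bm\ell} = \mathsf M_0\mathsf\Lambda\varphi$ in time.
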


The various terms in equation~\eqref{tau example} can be interpreted as follows.
\begin{itemize}
\item $\mathsf M_e \dot{\bm w}$ and $m_e \dot{\bm v}$ are inertial resistance to change of angular and translational velocities.
\item $m_e \bm w \times \bm v$ is the centripetal force required to rotate and move at the same time.
\item $\bm w \times (\mathsf M_e \bm w)$ is the precession torque (so that if the moment of inertia is not isotropic, then the body spins in a counter-intuitive manner, see, for example, \cite{kawano-et-al}).
\item
\begin{equation}
\mathsf \Lambda^T \mathsf M_0 (\liederiv_\varphi \mathsf \Lambda) \varphi + \mathsf\Lambda^T \mathsf M_0 \mathsf \Lambda \alpha = \mathsf T \bm f_0
\end{equation}
is the wrench required to move the actuators, where the no-load forces may be computed using
\begin{equation}
\bm f_0 = \mathsf M_0 \liederiv_\varphi(\mathsf\Lambda \varphi) + \mathsf M_0 \mathsf\Lambda \alpha .
\end{equation}
\item $m_e \bm g$ is the force due to gravity.
\item All terms containing $\bm r_0$ are corrections required since the center of gravity isn't necessarily the same as the origin of the moving frame of reference.  They could be derived by first finding the equations of motion when $\bm r_0 = 0$, and then applying equations~\eqref{twist correction} and~\eqref{torque correction}.
\end{itemize}

\subsection{Numerical verification of the dynamics equations}

The way we numerically verified these equations was by running a simulated motion, and then calculating the work done in three ways.  The first method was to integrate the inner product of the twist and the wrench on the end effector.  The second method was to integrate the sum of the actuator force times the rate of change of the length of the actuator.  The third method was to calculate the total kinetic energy using equation~\eqref{ke written in full}, and the potential energy using the standard mass times gravity time height formula.  The numerical simulations gave the same results for all three methods up to machine precision.

\section{Appendix: Proofs}
\label{sec proofs}

\begin{Lemma} 
\label{lemma approx normalize}
If $\theta$ is a vector dual quaternion, then
\begin{equation}
\label{approx normalize}
\widehat {(1+\theta)} = 1 + \theta + \tfrac12 \theta^2 + O(\theta^3).
\end{equation}
\end{Lemma}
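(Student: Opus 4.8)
The plan is to reduce the normalization to a single scalar (dual-number) square root and then Taylor-expand. First I would record the one fact that makes $1+\theta$ special: since $\theta$ is a \emph{vector} dual quaternion, writing $\theta = \tfrac12\bm a + \tfrac12\epsilon\bm b$ with $\bm a,\bm b$ vector quaternions, conjugation simply negates it, $\theta^* = -\theta$. Hence, setting $\eta = 1+\theta$, we get $\eta^* = 1-\theta$ and
\begin{equation}
|\eta|^2 = \eta^*\eta = (1-\theta)(1+\theta) = 1 - \theta^2 .
\end{equation}

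Next I would observe that $\theta^2$ is not a generic dual quaternion but a \emph{dual number}, i.e.\ a scalar: using $\bm a^2 = -|\bm a|^2$ and $\bm a\bm b + \bm b\bm a = -2\,\bm a\cdot\bm b$ for vector quaternions, one finds $\theta^2 = -\tfrac14|\bm a|^2 - \tfrac12\epsilon\,\bm a\cdot\bm b$, which is central in the dual quaternion algebra. Consequently $|\eta|^2 = 1-\theta^2$ is a dual number whose real part is close to $1$, so its square root (defined by the dual-number formula~\eqref{norm}) and its inverse are given by the ordinary binomial series in the small central quantity $\theta^2$:
\begin{equation}
|\eta|^{-1} = (1-\theta^2)^{-1/2} = 1 + \tfrac12\theta^2 + O(\theta^4).
\end{equation}

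Finally I would multiply, using $\widehat\eta = |\eta|^{-1}\eta$ from~\eqref{normalize}:
\begin{equation}
\widehat{(1+\theta)} = \bigl(1 + \tfrac12\theta^2 + O(\theta^4)\bigr)(1+\theta) = 1 + \theta + \tfrac12\theta^2 + \tfrac12\theta^3 + O(\theta^4),
\end{equation}
and then absorb the cubic remainder $\tfrac12\theta^3 + O(\theta^4)$ into $O(\theta^3)$, which yields the claim. There is essentially no hard step here; the only things to be careful about are verifying $\theta^* = -\theta$ (which is exactly where the hypothesis that $\theta$ is a vector dual quaternion enters) and noting that $\theta^2$ is a central dual number, so that the scalar square-root expansion is legitimate and one need not worry about non-commutativity when forming $(1-\theta^2)^{-1/2}$.
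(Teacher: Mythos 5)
Your proof is correct and follows essentially the same route as the paper's: use $\theta^* = -\theta$ to get $|1+\theta|^2 = 1-\theta^2$, Taylor-expand the inverse norm, and multiply by $1+\theta$. The one refinement you add --- explicitly verifying that $\theta^2$ is a central dual number so that the scalar binomial expansion is legitimate --- is a detail the paper leaves implicit, but it does not change the argument.
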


We remark that since the exponential of a unit dual quaternion $\theta$ \cite{wang-et-al} satisfies
\begin{equation}
\label{exponential}
\exp(\theta) = \sum_{k=0}^\infty \frac{\theta^k}{k!},
\end{equation}
then by equation~\eqref{approx normalize} we have
\begin{equation}
\label{exp-normal}
\exp(\theta) = \widehat{(1+\theta)} + O(\theta^3) .
\end{equation}

Let us clarify the big-Oh notation.  We say that
\begin{equation}
\eta(\theta) = O(\gamma(\theta))
\end{equation}
if there exists a constant $c>0$ such that if $\size_l(\theta)$ is sufficiently small, then
\begin{equation}
\size_l( \eta(\theta)) \le c \size_l ( \gamma(\theta)) .
\end{equation}
Let us show that this definition does not depend upon the characteristic length $l$.  In \cite{montgomery-smith-functional-calculus}, we show how to define the `functional calculus' of dual quaternions.  That is, given a continuously differentiable function $f:\mathbb C \to \mathbb C$, satisfying $f(\bar z) = \overline{f(z)}$, we can make sense of $f(\eta)$ for any dual quaternion $\eta = A + \epsilon B$.  First we can define $f$ on quaternions by realizing that any unit vector $\bm n$ satisfies $\bm n^2 = -1$, and hence one merely treats $\bm n$ as an imaginary unit.  Next, if $B$ is decomposed as $B_1 + \bm b_2$, where $B_1$ commutes with $A$, and $\bm b_2$ is a vector quaternion that anti-commutes with $\imagpart(A)$, and setting
\begin{gather}
f_x(z) = \lim_{h \to 0, h \in \mathbb R} \frac{f(z+h)} h , \\
f_{iy}(z) = \lim_{h \to 0, h \in \mathbb R} \frac{f(z+ih)} {ih} , \\
h_f(z) = \begin{cases}\dfrac{f(z) - f(\bar z)}{z - \bar z} & \text{if $\realpart(z) \ne 0$} \\
f_{iy}(z) & \text{if $\realpart(z) = 0$,} \end{cases}
\end{gather}
then we have
\begin{equation}
f(A + \epsilon B) = f(A) + \epsilon f_x(A) \realpart(B_1) + \epsilon f_{iy}(A) \imagpart(B_1) + \epsilon h_f(A) \bm b_2.
\end{equation}
This means that for any functions $f$ and $g$, that $\size_l(f(A + \epsilon B)) \le \size_l(g(A + \epsilon B))$ for all quaternions $B$ if and only if $|f(A)| \le |g(A)|$, $|f_x(A)| \le |g_x(A)|$, $|f_{iy}(A)| \le |g_{iy}(A)|$, and $|h_f(A)| \le |h_g(A)|$.  In particular, the comparison of size does depend upon which characteristic length $l$ is used.

\begin{proof}[Proof of Lemma~\ref{lemma approx normalize}]  Since $\theta^* = -\theta$, we have
\begin{equation}
|1+\theta|^2 = 1 - \theta^2.
\end{equation}
Hence using Taylor's series
\begin{equation}
|1+\theta|^{-1} = 1 + \tfrac12 \theta^2 + O(\theta^3)
\end{equation}
from which it follows that
\begin{equation}
\widehat{(1+\theta)} = (1+\theta)(1+\theta)^{-1} = 1 + \theta + \tfrac12 \theta^2 + O(\theta^3)  .
\end{equation}
\end{proof}

\begin{Lemma}
\label{not depend}
The definition of $\liederiv_\theta g$ in equation~\eqref{lie diff defn} does not depend upon the extension of $g$ from $\setunitdualquat$ to a neighborhood of $\setunitdualquat$ in $\setinvertibledualquat$.
\end{Lemma}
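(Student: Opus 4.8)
The plan is to replace the straight-line path $r \mapsto \eta(1+r\theta)$ appearing in the definition~\eqref{lie diff defn} by a genuinely curved path that stays inside $\setunitdualquat$ and has the same velocity at $r = 0$. The conceptual point is that, because $\theta$ is a \emph{vector} dual quaternion, the velocity vector $\eta\theta$ of the straight-line path is actually tangent to $\setunitdualquat$ at $\eta$; hence the derivative in~\eqref{lie diff defn} can be computed along a curve confined to $\setunitdualquat$, where $g$ is pinned down independently of any extension.

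First I would exhibit such an in-manifold curve. The natural choice is $c(r) = \eta\,\widehat{(1+r\theta)}$: since the normalization $\widehat{(1+r\theta)}$ is a unit dual quaternion and $\eta \in \setunitdualquat$, the product $c(r)$ lies in $\setunitdualquat$ for every $r$. By Lemma~\ref{lemma approx normalize},
\[
c(r) = \eta\bigl(1 + r\theta + \tfrac12 r^2\theta^2 + O(r^3)\bigr),
\]
so that $\eta(1+r\theta) - c(r) = -\tfrac12 r^2\,\eta\theta^2 + O(r^3) = O(r^2)$. Thus the two paths agree in value and first derivative at $r=0$, both having velocity $\eta\theta$ there. (Equivalently one may take $c(r) = \eta\exp(r\theta)$, which lies in $\setunitdualquat$ because $\theta^*=-\theta$ gives $\exp(r\theta)^*\exp(r\theta) = \exp(-r\theta)\exp(r\theta) = 1$, and which differs from the normalized curve only at order $r^3$ by~\eqref{exp-normal}.)

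Next I would invoke differentiability of $g$ to transfer the derivative from one path to the other. Since the arguments $\eta(1+r\theta)$ and $c(r)$ differ by $O(r^2)$ and $g$ is differentiable at $\eta$, the two difference quotients $\tfrac1r\bigl(g(\eta(1+r\theta)) - g(\eta)\bigr)$ and $\tfrac1r\bigl(g(c(r)) - g(\eta)\bigr)$ differ by $O(r)$, which vanishes as $r\to 0$. Hence
\[
\liederiv_\theta g = \left.\frac{d}{dr}\,g(c(r))\right|_{r=0}.
\]

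Finally, the right-hand side evaluates $g$ only at the points $c(r)$, all of which lie in $\setunitdualquat$. Consequently, if $g_1$ and $g_2$ are two extensions that agree on $\setunitdualquat$, then $g_1(c(r)) = g_2(c(r))$ for all $r$, and differentiating at $r=0$ yields equality of the two Lie derivatives. I expect the only step needing genuine care to be the $O(r^2)$ comparison of the two paths --- which is exactly where Lemma~\ref{lemma approx normalize} does the work --- after which the independence from the extension is immediate.
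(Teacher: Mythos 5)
Your proposal is correct and is essentially the paper's own argument: both replace the straight-line path $\eta(1+r\theta)$ by the in-manifold curve $\eta\,\widehat{(1+r\theta)}$, use Lemma~\ref{lemma approx normalize} to see that the two paths agree to first order at $r=0$, and conclude that the Lie derivative depends only on the values of $g$ on $\setunitdualquat$. The paper merely packages the same idea differently --- it introduces the canonical extension $\tilde g(\eta) = g_i(\widehat\eta)$ and applies the chain rule together with $\left.\frac{d}{dr}\widehat{(1+r\theta)}\right|_{r=0} = \theta$, which is exactly your tangency-of-curves argument in other clothing.
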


\begin{proof}  First note that from equation~\eqref{approx normalize} we have
\begin{equation}
\label{diff normalize}
\left.\frac d{dr} \widehat{(1 + r \theta)} \right|_{r=0} = \theta.
\end{equation}
Let $g_1$ and $g_2$ be two extensions of $g$ from $\setunitdualquat$ to a neighborhood of $\setunitdualquat$ in $\setinvertibledualquat$.  Define
\begin{equation}
\tilde g(\eta) = g_1(\widehat\eta) = g_2(\widehat\eta).
\end{equation}
Then
\begin{equation}
\begin{aligned}
\left.\frac{d}{d r} \tilde g(\eta(1+r\theta)) \right |_{r = 0}
&=
\left.\frac{d}{d r} g_1(\eta\widehat{(1+r\theta)}) \right |_{r = 0} \\
&=\left.\frac{d}{d r} g_1(\eta(1+r\theta)) \right |_{r = 0},
\end{aligned}
\end{equation}
where the second equality follows from the chain rule and equation~\eqref{diff normalize}.  Similarly for $g_2$.
\end{proof}

\begin{proof}[Proof that Definition~\eqref{lie diff defn} implies Equation~\eqref{lie diff defn 2}]

Using the chain rule for partial derivatives, we obtain
\begin{equation}
\begin{aligned}
\frac{d}{d r} g(\eta(1+r\theta))
&= \sum_{i=1}^8 \left(\frac{\partial g}{\partial \eta_i}(\eta(1+r\theta))\right) \frac{d}{d r} (\eta(1+r\theta))_i 
\\
&= \sum_{i=1}^8 \left(\frac{\partial g}{\partial \eta_i}(\eta(1+r\theta))\right) (\eta\theta)_i .
\end{aligned}
\end{equation}
Now set $r = 0$.
\end{proof}

\begin{proof}[Proof that Equation\eqref{decomp theta} implies Equation~\eqref{decomp partial theta}]
Equation~\eqref{decomp theta} can be written as
\begin{equation}
\bm a = 2(\theta_1 \bm i + \theta_2 \bm j + \theta_3 \bm k), \quad
\bm b = 2(\theta_4 \bm i + \theta_5 \bm j + \theta_6 \bm k),
\end{equation}
where $\theta_i$ is the $\beta_i$ component of $\theta$ as described in \eqref{dual quaternion basis}.  If $f$ is any function of the vector dual quaternions, we have
\begin{equation}
\begin{aligned}
\frac{\partial}{\partial \theta_i} f(\theta)
&= \sum_{j=1}^3 \frac{\partial \bm a_j}{\partial \theta_i} \frac{\partial}{\partial \bm a_j} f(\theta)
+ \sum_{j=1}^3 \frac{\partial \bm b_j}{\partial \theta_i} \frac{\partial}{\partial \bm b_j} f(\theta)
\\&
= \begin{cases}
2 \dfrac{\partial}{\partial \bm a_i} f(\theta) & \text{if $1 \le i \le 3$} \\ \\
2 \dfrac{\partial}{\partial \bm b_{i-3}} f(\theta) & \text{if $4 \le i \le 6$.}
\end{cases}
\end{aligned}
\end{equation}
\end{proof}

\begin{proof}[Proof of Equations~\eqref{lie deriv vector 1} and~\eqref{lie deriv vector 2}]
We have
\begin{align}
\tilde{\bm s} &= Q \bm s Q^* + 2 B Q^* , \\
\tilde{\bm n} &= Q \bm n Q^*,
\end{align}
or
\begin{align}
\bm s &= Q^* (\tilde{\bm s} Q - 2 B ) , \\
\bm n &= Q^* \tilde{\bm n} Q .
\end{align}
Now if $\theta = \tfrac12 \bm a + \tfrac12\epsilon\bm b$, then
\begin{equation}
\liederiv_\theta(\eta) = \eta\psi = \tfrac12 Q \bm a + \tfrac12 \epsilon (Q \bm b + B \bm a),
\end{equation}
that is
\begin{gather}
\liederiv_\theta(Q) = \tfrac12 Q \bm a, \\
\liederiv_\theta(B) = \tfrac12 (Q \bm b + B \bm a) .
\end{gather}
Remembering $\bm a^* = -\bm a$, and that $\bm a \bm s - \bm s \bm a = 2 \,\bm a \times \bm s$, we obtain
\begin{align}
\mathcal L_\theta \bm s &= -\bm a \times \bm s - \bm b ,\\
\mathcal L_\theta \bm n &= - \bm a \times \bm n .
\end{align}
\end{proof}

\begin{proof}[Proof of Equation~\eqref{T=L^T}]
The rate of change of work done on the parallel robot can be computed in two different ways, either using equation~\eqref{dot h tau varphi}, or~\eqref{dot h f ell}.  Substituting in equations~\eqref{dot l Lambda phi} and~\eqref{T}, we obtain
\begin{equation}
\mathsf T \bm f \cdot \varphi = \bm f \cdot \mathsf \Lambda \varphi = \mathsf\Lambda^{T} \bm f \cdot \varphi ,
\end{equation}
the last equality being a standard formula for the transpose of a matrix.  Since this is true for arbitrary actuator forces $\bm f$ and end effector twists $\varphi$, the result follows.
\end{proof}

\begin{proof}[Proof of Equation~\eqref{second lie deriv}]
Applying the rules given in Section~\ref{sec lie derivatives}, we obtain
\begin{align}
\liederiv_\theta \liederiv_\psi g
&= 
\liederiv_\theta \left( \frac{\partial g}{\partial \eta} \cdot (\eta \psi) \right) \\
&= 
\sum_{i=1}^8 \liederiv_\theta\left(\frac{\partial g}{\partial \eta_i}\right) (\eta \psi)_i +
\frac{\partial g}{\partial \eta} \cdot \liederiv_\theta (\eta \psi) \\
&= 
\sum_{i=1}^8 \left( \sum_{j=1}^8\left(\frac{\partial^2 g}{\partial \eta_j \eta_i}\right)(\eta\theta)_j\right) (\eta \psi)_i +
 \frac{\partial g}{\partial \eta} \cdot (\eta \psi \theta) \\
&= (\eta \psi) \cdot \left[\frac{\partial^2 g}{\partial \eta^2}\right](\eta \theta) + \frac{\partial g}{\partial \eta} \cdot (\eta \theta \psi) .
\end{align}
\end{proof}

As a corollary to equation~\eqref{second lie deriv}, we obtain the well known identity:
\begin{equation}
\label{lie bracket diff = diff lie bracket}
\liederiv_\theta \liederiv_\psi g - \liederiv_\psi \liederiv_\theta g = \liederiv_{(\theta\psi - \psi\theta)} g,
\end{equation}
which implies that Lie derivatives do not necessary commute.

\begin{proof}[Proof of Equation~\eqref{hessian}]
We wish to find the Jacobian $\delta$ and the Hessian $\mathsf H$ of $\tilde b$ at the origin, where
\begin{equation}
\tilde b(\theta) = b(\eta  \widehat{(1 + \theta)}).
\end{equation}
We can find this by considering its Taylor series expansion
\begin{equation}
\tilde b(\theta) = \tilde b(0) + \sum_{i=1}^6 \delta_{i} \theta_i + \tfrac12 \sum_{i,j=1}^6 \mathsf H_{i,j} \theta_i \theta_j + O(\theta^3),
\end{equation}
where
\begin{equation}
\theta = \sum_{i=1}^6 \theta_i \beta_i.
\end{equation}
Using the Taylor series, and using equation~\eqref{approx normalize}, one obtains
\begin{equation}
\begin{aligned}
&\tilde b(\theta) = b(\eta  \widehat{(1 + \theta)})
= b(\eta(1 + \theta + \tfrac12 \theta^2 + O(\theta^3))) \\
&= b(\eta) + \frac{\partial b}{\partial \eta} \cdot(\eta(\theta + \tfrac12 \theta^2)) \\
&\phantom{{}={}} + \tfrac12 (\eta \theta) \cdot \left[\frac{\partial^2 b}{\partial \eta^2}\right] (\eta\theta)
+ O(\theta^2) \\
&= b(\eta) + \frac{\partial b}{\partial \eta} \cdot(\eta(\theta)) \\
&\phantom{{}={}} + \tfrac12 (\eta \theta) \cdot \left[\frac{\partial^2 b}{\partial \eta^2}\right] (\eta\theta) + \frac{\partial b}{\partial \eta} \cdot(\tfrac12 \eta \theta^2) + O(\theta^3) .
\end{aligned}
\end{equation}
Now by comparing coefficients, and considering equations~\eqref{lie diff defn 2} and~\eqref{second lie deriv}, we obtain
\begin{equation}
\delta_i = \liederiv_i b(\eta), \quad (1 \le i \le 6),
\end{equation}
and
\begin{equation}
\begin{aligned}
\mathsf H_{i,j} &=
(\eta \beta_i) \cdot \left[\frac{\partial^2 b}{\partial \eta^2}\right] (\eta\beta_j) + \frac{\partial b}{\partial \eta} \cdot(\tfrac12 \eta(\beta_i\beta_j+\beta_j\beta_i)) \\
&= \tfrac12 (\liederiv_i \liederiv_j b(\eta) + \liederiv_j \liederiv_i b(\eta)),
\quad (1\le i,j\le 6).
\end{aligned}
\end{equation}
\end{proof}

\begin{proof}[Proof of Equation~\eqref{L ell_k}]
Suppose that the end effector is moving with pose $\eta = Q + \epsilon B$, and twist $\phi = \tfrac12 \bm w + \tfrac12 \epsilon \bm v$, then with respect to the fixed frame of reference, the velocity of the attachment point $\bm r_k$ is $Q (\bm v + \bm w \times \bm r_k)Q^*$.  And if a force $f_k$ is applied along the direction $\bm u_k$, then the force applied to the end-effector is $f_k Q\bm u_kQ^*$ with respect to the fixed frame of reference.

Hence computing the rate of change of virtual work, we obtain
\begin{equation}
f_k \dot \ell_k = (\bm v + \bm w \times \bm r_k) \cdot f_k \bm u_k.
\end{equation}
Therefore
\begin{equation}
\dot \ell_k = \bm u_k \cdot \bm v + \bm r_k \times \bm u_k \cdot \bm w .
\end{equation}
The result follows by equation~\eqref{dot l Lambda phi}.
\end{proof}

For complex situations, where the cables might pass through pulleys, the simplicity of equation~\eqref{L ell_k} can be a bit surprising.  This might best be intuitively understood by seeing that the involute of the curve describing the shape of the pulley is the curve traced by the end effector attachment point when the cable length of that actuator is kept fixed, and that the evolute is the opposite process.

\begin{proof}[Proof of Equation~\eqref{L u_k - fixed}]
Note that
\begin{equation}
\mathcal L_\psi \bm u_k
= - \frac1{|\bm r_k - \bm s_k|}\mathsf P_{\bm u_k} \mathcal L_\psi \bm s_k.
\end{equation}
Now apply equation~\eqref{lie deriv vector 1}.
\end{proof}

\bigskip

Next, we justify the Newton-Raphson Methods for forward kinematics.  The method as usually states only applies to linear vector spaces, whereas we are working on the non-linear manifold of unit dual quaternions.  Thus given $\eta_k$, we need to define a map from the vector space of vector dual quaternions to unit dual quaternions close to $\eta_k$.  See, for example, \cite{huper-trumpf}.

Most papers on the Newton-Raphson Method on manifolds construct this map using the so called exp function \cite{dedieu-et-al,fernandes-et-al,ferreira-svaiter}.  So the map is
\begin{equation}
\theta \mapsto \eta_k \exp(\theta).
\end{equation}
The exp map in these papers is following the path of a geodesic on the manifold, and this is equivalent to using the equations of motion of the end effector as described in Section~\ref{dynamics}.  Another exp map is to follow a one-parameter subgroup, or equivalently, equation~\eqref{exponential}.  We do not take these approaches.  Our approach is to normalize:
\begin{equation}
\theta \mapsto \eta_k  \widehat{(1 + \theta)}.
\end{equation}
(However, this is numerically close to the second approach, as is shown by equation~\eqref{exp-normal}.  Using normalization instead of the exp map slightly reduces the complexity of the calculations as transcendental trigonometric functions are not required.  But if one wants to use the exp map, simply replace $\widehat{(1+\theta)}$ by $\exp(\theta)$ throughout.)

The main substantive difference between these methods, and the standard method that is used on linear vector spaces,  is that the map from the linear space of vector dual quaternions to the manifold of unit dual quaternions changes with each iteration, since the map depends upon $\eta_k$.  But the theory that the Newton-Raphson Method converges with quadratic order is based upon examination of each iteration separately, so this shouldn't pose a great issue.

\begin{proof}[Justification of Equation~\eqref{exact constrained Newton-Raphson}]
If $\mathsf F:\mathbb R^6 \to \mathbb R^6$ is a function for which we wish to solve for $\mathsf F(\bm x) = 0$, the method is to iterate
\begin{equation}
\bm x - \left[\frac{\partial \mathsf F}{\partial \bm x}\right]^{-1} F(\bm x) .
\end{equation}
Our approach, then, is to consider the map
\begin{equation}
\theta \mapsto \mathsf F(\theta) := \mathsf L(\eta_k \widehat{(1 + \theta)}) - \bm\ell.
\end{equation}
The prior guess is then $\theta_k = 0$.  We have that
\begin{equation}
\frac{\partial \mathsf F}{\partial \theta} \Big|_{\theta = 0} = \mathsf \Lambda(\eta_k) .
\end{equation}
The result follows.
\end{proof}

\begin{proof}[Justification of Equation~\eqref{over constrained Newton-Raphson}]
We seek to find the pose $\eta$ so that $\mathsf L(\eta)$ is close as possible to $\bm\ell$.  This is performed by minimizing the \emph{loss function}
\begin{equation}
b(\eta) = \tfrac12 {|\mathsf L(\eta) - \bm\ell|}^2 .
\end{equation}
The standard Newton-Raphson Method for optimizing the real valued quantity $F(\bm x)$, where $\bm x$ is an element of a vector space, is to iterate
\begin{equation}
\bm x - \left[\frac{\partial^2 F}{\partial \bm x^2}\right]^{-1} \frac{\partial F}{\partial \bm x} .
\end{equation}
In our case $\bm x$ is $\theta$,
\begin{equation}
F(\theta) = b(\eta_k\widehat{(1+\theta)}),
\end{equation}
and our previous iterate is $\theta_k = 0$.  The Jacobian is
\begin{equation}
\label{jacobian}
\delta_k = \left.\frac{\partial F(\theta)}{\partial\theta} \right|_{\theta = 0} = (\liederiv_i b(\eta_k))_{1 \le i \le 6}
= \left(\sum_{m=1}^n \mathsf \Lambda_{m,i} (\mathsf L_m(\eta_k) - \ell_m)\right)_{1 \le i \le 6},
\end{equation}
and the Hessian is $\mathsf H_k$, which by equation~\eqref{hessian} is
\begin{equation}
\mathsf H_k
= \left[\tfrac12 (\liederiv_i \liederiv_j b(\eta_k) + \liederiv_j \liederiv_i b(\eta_k)) \right]_{1\le i,j\le 6},
\end{equation}
noting that
\begin{equation}
\liederiv_i \liederiv_j b(\eta_k) = \sum_{m=1}^n \mathsf\Lambda_{m,i} \mathsf\Lambda_{m,j} + \mathcal L_i \mathsf \Lambda_{m,j} (\mathsf L_m(\eta_k) - \ell_m) .
\end{equation}
\end{proof}

\bigskip

Now we work on proving Theorem~\ref{equation of motion}.  We use the Euler-Lagrange equations.  (However, one could also use standard formulas for rotating bodies, and Newtonian physics, to obtain the same result.)

Define the \emph{cross product} of two vector dual quaternions $\alpha = \bm a + \epsilon \bm b$ and $\beta = \bm c + \epsilon \bm d$ by
\begin{equation}
\alpha \times \beta = \tfrac12(\alpha \beta - \beta \alpha) = \bm a \times \bm c + \epsilon (\bm a \times \bm d + \bm b \times \bm c) .
\end{equation}
Define the \emph{adjoint products} of vector dual quaternions by
\begin{align}
\label{ltimes}
\alpha \ltimes \beta &= \bm c \times \bm a + \bm d \times \bm b + \epsilon (\bm c \times \bm b) ,\\
\alpha \rtimes \beta &= - \beta \ltimes \alpha ,
\end{align}
which can also be defined by the property that for all vector dual quaternions $\alpha$, $\beta$, and $\gamma$ we have
\begin{equation}
(\alpha \times \beta) \cdot \gamma = \alpha \cdot (\gamma \ltimes \beta) = \beta \cdot (\alpha \rtimes \gamma ).
\end{equation}
(Note that in the Lie algebra literature, the map $\beta \mapsto \alpha \times \beta$ is often denoted $\text{ad}_\alpha$.  Thus the map $\gamma \mapsto \alpha \rtimes \gamma$ is the formal dual of $\text{ad}_\alpha$.)


\begin{Theorem}
\label{euler-lagrange}
If the kinetic energy $e$ satisfies equation~\eqref{ke}, and $v = v(\eta)$ denotes the potential energy, then the equation of motion is
\begin{equation}
\label{tau M gamma}
\tau = \mu_1 + \mu_2 + \mathsf M \alpha,
\end{equation}
where
\begin{equation}
\mu_1 = \liederiv_\varphi \mathsf M \varphi - \tfrac12 \liederiv(\varphi \cdot \mathsf M \varphi) + \liederiv v, 
\end{equation}
and for any constant vector dual quaternion $\psi$ we have
\begin{equation}
\psi \cdot \mu_2 = 2 \varphi \cdot \mathsf M (\psi \times \varphi) = 2 \mathsf M \varphi \cdot (\psi \times \varphi),
\end{equation}
that is,
\begin{equation}
\label{formula mu_2}
\mu_2 = 2 (\mathsf M \varphi) \ltimes \varphi .
\end{equation}
\end{Theorem}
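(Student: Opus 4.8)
The plan is to derive the equation of motion from Hamilton's principle in its Lagrange--d'Alembert form, applied to the Lagrangian $L = e - v$ and worked entirely in the left-trivialized (body-frame) picture, so that the configuration is the pose $\eta \in \setunitdualquat$ and the velocity is the twist $\varphi = \eta^{-1}\dot\eta$. A variation of the trajectory is encoded by a time-dependent vector dual quaternion $\psi$ with $\delta\eta = \eta\psi$, vanishing at the two endpoints. Since $\delta\eta = \eta\psi$ and $\left.\frac{d}{dr}\eta(1+r\psi)\right|_{r=0} = \eta\psi$ define the same tangent vector, definition~\eqref{lie diff defn} gives $\delta g(\eta) = \liederiv_\psi g$ for any function $g$ of the pose. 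The external wrench $\tau$ contributes virtual work $\tau\cdot\psi$, consistent with equation~\eqref{dot h tau varphi}, and the principle reads $\delta\int(e - v)\,dt = -\int \tau\cdot\psi\,dt$ for all such $\psi$. The goal is to reduce the left-hand side to $-\int \psi\cdot(\cdots)\,dt$ and read off the coefficient.

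First I would establish the reduced kinematic variation
\begin{equation}
\delta\varphi = \dot\psi + 2(\varphi\times\psi),
\end{equation}
by computing $\delta(\eta^{-1}\dot\eta)$ using $\delta\eta^{-1} = -\eta^{-1}(\delta\eta)\eta^{-1}$, the commutation of $\delta$ with $d/dt$, the relation $\dot\eta = \eta\varphi$ from~\eqref{ode dual quaternion}, and the identity $\varphi\psi - \psi\varphi = 2(\varphi\times\psi)$. Next, since $\mathsf M$ is symmetric and depends only on $\eta$, the product rule gives
\begin{equation}
\delta e = \mathsf M\varphi\cdot\delta\varphi + \tfrac12\varphi\cdot(\liederiv_\psi\mathsf M)\varphi, \qquad \delta v = \psi\cdot\liederiv v .
\end{equation}
Substituting the variation formula and integrating the $\mathsf M\varphi\cdot\dot\psi$ term by parts — the endpoint terms vanish — I would use
\begin{equation}
\frac{d}{dt}(\mathsf M\varphi) = (\liederiv_\varphi\mathsf M)\varphi + \mathsf M\alpha,
\end{equation}
which follows from equation~\eqref{dot f Lie phi} applied to $\mathsf M(\eta)$ together with $\alpha = \dot\varphi$, to turn it into $-\int[(\liederiv_\varphi\mathsf M)\varphi + \mathsf M\alpha]\cdot\psi\,dt$.

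It then remains to express every surviving term as $\psi\cdot(\cdots)$. The term coming from $\delta\mathsf M$ becomes $\tfrac12\psi\cdot\liederiv(\varphi\cdot\mathsf M\varphi)$ (with $\varphi$ held constant) and the potential term is $\psi\cdot\liederiv v$, both via the defining property $\psi\cdot\liederiv g = \liederiv_\psi g$ of the full Lie derivative. The cross-product term I would handle directly from the characterization of $\mu_2$: using the symmetry of $\mathsf M$ and the antisymmetry of $\times$,
\begin{equation}
2\mathsf M\varphi\cdot(\varphi\times\psi) = -2\varphi\cdot\mathsf M(\psi\times\varphi) = -\psi\cdot\mu_2 ,
\end{equation}
where $\mu_2 = 2(\mathsf M\varphi)\ltimes\varphi$ is exactly the vector dual quaternion satisfying $\psi\cdot\mu_2 = 2\varphi\cdot\mathsf M(\psi\times\varphi)$. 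Collecting the four contributions and setting the result equal to $-\int\tau\cdot\psi\,dt$, the arbitrariness of $\psi$ yields $\tau = \mu_1 + \mu_2 + \mathsf M\alpha$ with $\mu_1 = (\liederiv_\varphi\mathsf M)\varphi - \tfrac12\liederiv(\varphi\cdot\mathsf M\varphi) + \liederiv v$, as claimed.

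I expect the main obstacle to be the reduced variation identity $\delta\varphi = \dot\psi + 2(\varphi\times\psi)$: this is the Euler--Poincar\'e reduced-variation formula, and its derivation hinges on carefully tracking the noncommutativity introduced by the left-trivialization — precisely the place where the Lie bracket $2(\varphi\times\psi)$ is born — as well as on fixing the sign in the Lagrange--d'Alembert term so that $\tau$ appears as the applied (rather than reactive) wrench. Once these two points are settled, the rest is routine bookkeeping with the product rule, a single integration by parts, and the duality between $\times$ and $\ltimes$.
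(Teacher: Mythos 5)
Your proof is correct, and it reaches the theorem by a genuinely different route than the paper. The paper works in a local chart: it parametrizes poses near $\eta_0$ by $\theta \mapsto \eta_0\widehat{(1+\theta)}$, uses Lemma~\ref{lemma approx normalize} to expand $\varphi = \dot\theta + \tfrac12(\dot\theta\theta - \theta\dot\theta) + O(\theta^2)$ and $\alpha = \ddot\theta + O(\theta)$, and then applies the classical coordinate Euler--Lagrange equation with forcing, evaluated at $\theta = 0$. You instead stay coordinate-free on $\setunitdualquat$: the Lagrange--d'Alembert principle with left-trivialized variations $\delta\eta = \eta\psi$, together with the Euler--Poincar\'e reduced-variation identity $\delta\varphi = \dot\psi + 2(\varphi\times\psi)$, which is indeed correct under the paper's convention $\varphi\times\psi = \tfrac12(\varphi\psi - \psi\varphi)$ and follows by exactly the computation you outline; your sign convention $\delta\int(e-v)\,dt = -\int\tau\cdot\psi\,dt$ is also right, so collecting terms gives $\tau = \mu_1 + \mu_2 + \mathsf M\alpha$ in the stated generality (arbitrary $\mathsf M(\eta)$ and $v(\eta)$). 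What your route buys: you never need the normalization expansion or the paper's ``differentiate in $t$ before setting $t = t_0$'' bookkeeping, and the gyroscopic term $\mu_2$ is born whole in one place --- from $2\mathsf M\varphi\cdot(\varphi\times\psi) = -\psi\cdot\mu_2$ inside $\delta e$ --- whereas in the paper's chart computation it arises split between the $\frac{d}{dt}(\partial e/\partial\dot\theta)$ and $\partial e/\partial\theta$ terms, a delicate piece of bookkeeping that the paper's displayed intermediate formulas treat rather loosely even though the final result is unaffected. What the paper's route buys: it leans only on the textbook Euler--Lagrange equation, avoiding any need to justify a forced variational principle on a Lie group, and it reuses the same chart $\theta \mapsto \eta_k\widehat{(1+\theta)}$ that underlies the Newton--Raphson algorithms elsewhere in the paper. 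The remaining steps of your sketch --- the symmetry of $\mathsf M$ in computing $\delta e$, the identity $\frac{d}{dt}(\mathsf M\varphi) = (\liederiv_\varphi\mathsf M)\varphi + \mathsf M\alpha$ from equation~\eqref{dot f Lie phi}, the endpoint-vanishing integration by parts, and the duality between $\times$ and $\ltimes$ --- all check out.
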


\begin{proof}  In preparation to apply the Euler-Lagrange Equation, given $\eta_0 \in \setunitdualquat$, we define a map from an open neighborhood of the origin in $\mathbb R^6$ to an open neighborhood of $\eta_0$ in $\setunitdualquat$
\begin{equation}
\label{eta(theta)}
\begin{aligned}
\theta &\mapsto \eta(\theta) \\
&= \eta_0 \widehat{(1+\theta)} \\
&= \eta_0 (1 + \theta + \tfrac12 \theta^2) + O(\theta^3),
\end{aligned}
\end{equation}
where in the last inequality we used equation~\eqref{approx normalize}.  Then we have
\begin{equation}
\label{dot theta}
\begin{aligned}
\varphi &= \eta^{-1} \dot\eta \\
&= \widehat{(1+\theta)}^{-1} (\dot \theta + \tfrac12(\dot\theta\theta + \theta\dot\theta)) + O(\theta^2) \\
&= \dot\theta + \tfrac12(\dot\theta\theta - \theta\dot\theta) + O(\theta^2) ,
\end{aligned}
\end{equation}
and
\begin{equation}
\label{ddot theta}
\alpha = \ddot\theta + O(\theta) .
\end{equation}
The Lagrangian of the parallel robot is
\begin{equation}
l = e - v = \tfrac12 \varphi \cdot \mathsf M \varphi - v.
\end{equation}
We use the local coordinate system $\theta$ given by equation~\eqref{eta(theta)}.  The Euler-Lagrange Equation \cite{arnold,goldstein-et-al} tells us
\begin{equation}
\label{e-l}
\frac d{dt}\left(\frac{\partial l}{\partial \dot\theta}\right) - \frac{\partial l}{\partial \theta} = \tau.
\end{equation}
We suppose that $\eta_0 = \eta(t_0)$, and $\theta(t_0) = 0$, and from now on in this proof, all equations are stated assuming the condition $t = t_0$.  Thus we only prove our results when $t = t_0$.  But since $t_0$ is arbitrary, this is not a limitation.  However, it is important that derivatives are calculated before setting $t = t_0$.  In particular, this means that for any function $f$ of $\eta$ that
\begin{equation}
\frac{\partial f}{\partial \theta} = \liederiv f.
\end{equation}

We have
\begin{equation}
\begin{aligned}
\frac d{dt}\left(\frac{\partial e}{\partial \dot\theta}\right)
&= \mathsf M\alpha + \dot{\mathsf M} \varphi \\
&= \mathsf M\alpha + \left(\dot\theta \cdot \frac{\partial \mathsf M}{\partial\theta}\right) \dot\theta \\
& = \mathsf M\alpha + \left(\varphi \cdot \frac{\partial \mathsf M}{\partial\theta}\right) \varphi,
\end{aligned}
\end{equation}
and
\begin{equation}
\label{de/dtheta}
\begin{aligned}
\frac{\partial e}{\partial \theta}
&= \dot \theta \cdot \mathsf M \frac{\partial}{\partial\theta}(\dot\theta \theta - \theta \dot\theta) + \tfrac12 \dot\theta \cdot \left(\frac{\partial \mathsf M}{\partial\theta}\right) \dot\theta \\
&= \varphi \cdot \mathsf M \frac{\partial}{\partial\theta}(\varphi \theta - \theta \varphi) + \tfrac12 \varphi \cdot \left(\frac{\partial \mathsf M}{\partial\theta}\right) \varphi .
\end{aligned}
\end{equation}
Note that if $f$ is any linear function whose domain is the vector dual quaternions, then
\begin{equation}
\psi \cdot \left(\frac{\partial}{\partial \theta} f(\theta)\right) = f(\psi) .
\end{equation}
Thus taking the dot product of equation~\eqref{de/dtheta} with any constant vector dual quaternion $\psi$, we obtain the result.
\end{proof}



\begin{Lemma}
\label{lambda pot energy}
If $v$ is the potential energy of the end effector in a constant gravity field, whose value with respect to the moving frame is $\bm g$, then
\begin{equation}
\liederiv v = - 2 m_0 (\bm r_0 \times \bm g + \epsilon \bm g) .
\end{equation}
\end{Lemma}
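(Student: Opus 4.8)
The plan is to compute the directional Lie derivative $\liederiv_\varphi v$ for a twist $\varphi = \tfrac12 \bm w + \tfrac12 \epsilon \bm v$ by a virtual-work argument, exactly as in the proof of equation~\eqref{L ell_k}, and then read off the full Lie derivative $\liederiv v$ using the defining relation $\theta \cdot \liederiv v = \liederiv_\theta v$ with $\theta = \varphi$. First I would write the potential energy as $v = -m_e \tilde{\bm g} \cdot \tilde{\bm r}_0$, where $\tilde{\bm g}$ is the (constant) gravitational acceleration in the fixed frame, $\tilde{\bm r}_0 = Q \bm r_0 Q^* + 2BQ^*$ is the fixed-frame position of the centre of mass obtained from equation~\eqref{rigid motion on 3-vector}, and $\bm g = Q^* \tilde{\bm g} Q$ is its moving-frame value. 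The key observation is that $\tfrac{d}{dt} v$ equals minus the rate of work done by gravity, and by equation~\eqref{dot f Lie phi} this same quantity is $\liederiv_\varphi v$.

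For the power computation I would use, as in the proof of~\eqref{L ell_k}, that the fixed-frame velocity of the material point at moving-frame location $\bm r_0$ is $Q(\bm v + \bm w \times \bm r_0)Q^*$, while the gravitational force is $m_e \tilde{\bm g}$. Dotting these and transferring the conjugation onto $\tilde{\bm g}$ (legitimate since conjugation by a unit quaternion preserves the dot product) gives the rate of work $m_e \bm g \cdot (\bm v + \bm w \times \bm r_0)$, hence
\begin{equation}
\liederiv_\varphi v = - m_e \bm g \cdot \bm v - m_e \bm g \cdot (\bm w \times \bm r_0) = - m_e \bm v \cdot \bm g - m_e \bm w \cdot (\bm r_0 \times \bm g),
\end{equation}
where the last step is the scalar triple product identity.

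Finally I would extract $\liederiv v$. Writing $\liederiv v = \tfrac12 \bm c + \tfrac12 \epsilon \bm d$ and using $\varphi \cdot \liederiv v = \tfrac14(\bm w \cdot \bm c + \bm v \cdot \bm d)$ from the vector-dual-quaternion inner product, matching coefficients against the displayed expression for all $\bm w$ and $\bm v$ forces $\bm c = -4m_e(\bm r_0 \times \bm g)$ and $\bm d = -4m_e \bm g$, which is exactly the claimed formula. As an alternative that avoids the physical argument, I could differentiate $\tilde{\bm r}_0$ directly using the product rule together with $\liederiv_\theta Q = \tfrac12 Q\bm a$ and $\liederiv_\theta B = \tfrac12(Q\bm b + B\bm a)$ from the proof of~\eqref{lie deriv vector 1}, obtaining $\liederiv_\theta \tilde{\bm r}_0 = Q(\bm a \times \bm r_0 + \bm b)Q^*$ and then the same conclusion. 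The main point to watch is not a genuine obstacle but careful bookkeeping: one must retain both contributions --- the rotational term $\bm w \times \bm r_0$ producing $\bm r_0 \times \bm g$ and the translational term $\bm v$ producing the $\epsilon \bm g$ part --- and track the factors of $\tfrac12$ in $\varphi = \tfrac12\bm w + \tfrac12\epsilon\bm v$ against the factor of $\tfrac14$ in the inner product, which is precisely what converts the naive $-m_e$ into the stated $-2m_e$.
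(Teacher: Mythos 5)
Your proof is correct; it reaches the lemma by a genuinely different route than the paper. The paper never leaves the moving frame: it introduces an auxiliary world-fixed point, expressed in moving coordinates as $\bm r$, writes the potential energy as $v = m_0\,\bm g\cdot(\bm r - \bm r_0)$, and then applies the ready-made rules \eqref{lie deriv vector 1} and \eqref{lie deriv vector 2} (for the constant position $\bm r$ and the constant direction $\bm g$) together with the product rule; the $\bm r$-dependent terms cancel by the scalar triple-product identity, giving $\mathcal L_\theta v = -m_0(\bm a\cdot\bm r_0\times\bm g + \bm b\cdot\bm g)$, after which the conversion to $\liederiv v$ is left implicit. You instead write $v$ in the fixed frame as $-m_e\tilde{\bm g}\cdot\tilde{\bm r}_0$, identify $\dot v$ with $\liederiv_\varphi v$ via \eqref{dot f Lie phi}, and compute $\dot v$ as minus the gravitational power --- the same virtual-work device the paper uses to prove \eqref{L ell_k} --- then recover $\liederiv v$ from the duality $\theta\cdot\liederiv v = \liederiv_\theta v$, making explicit the coefficient matching (the $\tfrac14$ from the inner product against the $\tfrac12$ in $\varphi$) that the paper glosses over. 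What the paper's route buys is a self-contained exercise in the Lie-derivative calculus of Section~\ref{sec lie derivatives}, with no appeal to kinematic velocity formulas; what yours buys is physical transparency and avoidance of the auxiliary point $\bm r$, at the cost of importing the velocity formula and \eqref{dot f Lie phi}. Your fallback computation of $\liederiv_\theta\tilde{\bm r}_0 = Q(\bm a\times\bm r_0 + \bm b)Q^*$ from $\liederiv_\theta Q = \tfrac12 Q\bm a$ and $\liederiv_\theta B = \tfrac12(Q\bm b + B\bm a)$ is also correct, and is in effect a re-derivation of \eqref{lie deriv vector 1} viewed from the fixed frame. (A minor notational point: the lemma's $m_0$ is the end-effector mass, denoted $m_e$ elsewhere in the paper; your use of $m_e$ is consistent with what Theorem~\ref{equation of motion} actually asserts.)
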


\begin{proof}  Let $\bm r$ be a constant point expressed with respect to the moving frame.  We have
\begin{equation}
v = m_0 \bm g \cdot (\bm r - \bm r_0) .
\end{equation}
If $\theta = \tfrac12\bm a + \tfrac12 \epsilon \bm b$ is a vector dual quaternion, then by equation~\eqref{lie deriv vector 2}, it follows that
\begin{equation}
\begin{aligned}
\mathcal L_\theta v &= - m_0 ((\bm a \times \bm g) \cdot (\bm r - \bm r_0) + \bm g \cdot (\bm a \times \bm r + \bm b)) \\
&= -m_0 (\bm a \cdot \bm r_0 \times \bm g + \bm b \cdot \bm g ) .
\end{aligned}
\end{equation}
\end{proof}

\begin{proof}[Proof of Theorem~\ref{equation of motion}] The potential energy part is covered by Lemma~\ref{lambda pot energy}.  For the parts coming from the kinetic energy, using linearity, it is sufficient to prove it for the additive parts of $\mathsf M$.  The part not involving $m_0$ is proved using Theorem~\ref{euler-lagrange}, various vector identities, and remembering equation~\eqref{r star to r cross}.

So we only need to prove the kinetic energy portion in the case $\mathsf M = \mathsf \Lambda^T \mathsf M_0 \mathsf \Lambda$.  The easiest way to show this is to simply differentiate $\mathsf M_0 \dot \ell = \mathsf M_0 \mathsf\Lambda\varphi$ with respect to time.  To do it directly from the formulas is more complicated, as we now show.  For any constant vector dual quaternion $\psi$, we have
\begin{equation}
\begin{aligned}
\psi &\cdot (\liederiv_\varphi(\mathsf M \varphi) - \tfrac12 \liederiv(\varphi \cdot \mathsf M \varphi)) \\
&= (\liederiv_\varphi(\psi \cdot \mathsf M \varphi) - \tfrac12 \liederiv_\psi(\varphi \cdot \mathsf M \varphi)) \\
&= \liederiv_\varphi ((\mathsf \Lambda \psi) \cdot \mathsf M_0 (\mathsf \Lambda \varphi))
- \tfrac12 \liederiv_\psi  ((\mathsf \Lambda \varphi) \cdot \mathsf M_0(\mathsf \Lambda \varphi)) \\
&= \liederiv_\varphi \liederiv_\varphi \mathsf L \cdot \mathsf M_0 \liederiv_\psi \mathsf L
+ \liederiv_\varphi \liederiv_\psi \mathsf L \cdot \mathsf M_0\liederiv_\varphi \mathsf L - \liederiv_\psi \liederiv_\varphi \mathsf L \cdot \mathsf M_0\liederiv_\varphi \mathsf L \\
&= \liederiv_\varphi (\mathsf\Lambda \varphi) \cdot \mathsf M_0\mathsf\Lambda \psi
+ \liederiv_{(\varphi \psi - \psi \varphi)} \mathsf L \cdot \mathsf M_0\mathsf\Lambda \varphi \\
&= \psi \cdot \mathsf\Lambda^T \mathsf M_0 \liederiv_\varphi (\mathsf \Lambda \varphi)
+ \mathsf \Lambda (\varphi\psi - \psi\varphi) \cdot \mathsf M_0 \mathsf \Lambda \varphi
\\
&= \psi \cdot \mathsf\Lambda^T \mathsf M_0\liederiv_\varphi (\mathsf \Lambda \varphi)
+ (\varphi\psi - \psi\varphi) \cdot \mathsf M_0 \varphi,
\end{aligned}
\end{equation}
where we used equation~\eqref{lie bracket diff = diff lie bracket}.  Then it is simply a matter of collecting terms.
\end{proof}

\section{Conclusion}

This paper has given a comprehensive and consistent description of how to use dual quaternions to represent poses, rigid motions, twists, and wrenches.  We have introduced the notion of the Lie derivative for dual quaternions.  We have shown how these formula are helpful for first producing Newton-Raphson methods for solving the forward kinematics problem for parallel robots, and secondly for a self contained derivation for the dynamic equations of motion of the end effector that includes the inertia of the actuators.

Finally, in equation~\eqref{approx normalize}, we give an approximation of the normalization of a vector dual quaternion perturbation of the identity, which shows that it is equal up to the second order to the exponential of the vector dual quaternion.  This equation was essential for calculating the Hessian in the forwards kinematics algorithms.  We feel that this formula will be of independent interest to other researchers in the field of dual quaternions.

\end{document}